\documentclass{article}

\usepackage{microtype}
\usepackage{graphicx}
\usepackage{subcaption}
\usepackage{booktabs} 

\usepackage{footnote}

\usepackage{hyperref}

\usepackage[accepted]{icml2026}

\usepackage{amsmath}
\usepackage{amssymb}
\usepackage{mathtools}
\usepackage{amsthm}

\usepackage[capitalize,noabbrev]{cleveref}

\theoremstyle{plain}
\newtheorem{theorem}{Theorem}[section]
\newtheorem{proposition}[theorem]{Proposition}

\theoremstyle{definition}

\theoremstyle{remark}

\usepackage[textsize=tiny]{todonotes}

\icmltitlerunning{Mean Flow Policy Optimization}

\begin{document}

\twocolumn[
  \icmltitle{Mean Flow Policy Optimization}



  \icmlsetsymbol{equal}{*}
\begin{icmlauthorlist}
\icmlauthor{Xiaoyi Dong}{ia,ai_ucas}
\icmlauthor{Xi Sheryl Zhang}{ia}
\icmlauthor{Jian Cheng}{ia,airia,future_ucas}
\end{icmlauthorlist}

\icmlaffiliation{ia}{C\textsuperscript{2}DL, Institute of Automation, Chinese Academy of Sciences}
\icmlaffiliation{ai_ucas}{School of Artificial Intelligence, University of Chinese Academy of Sciences}
\icmlaffiliation{future_ucas}{School of Future Technology, University of Chinese Academy of Sciences}
\icmlaffiliation{airia}{AiRiA}

\icmlcorrespondingauthor{Jian Cheng}{jcheng@nlpr.ia.ac.cn}

  \icmlkeywords{Machine Learning, ICML}

  \vskip 0.3in
]



\printAffiliationsAndNotice{}  

\begin{abstract}
Diffusion models have recently emerged as expressive policy representations for online reinforcement learning (RL). However, their iterative generative processes introduce substantial training and inference overhead. To overcome this limitation, we propose to represent policies using MeanFlow models, a class of few-step flow-based generative models, to improve training and inference efficiency over diffusion-based RL approaches. To promote exploration, we optimize MeanFlow policies under the maximum entropy RL framework via soft policy iteration, and address two key challenges specific to MeanFlow policies: action likelihood evaluation and soft policy improvement. Experiments on MuJoCo, DeepMind Control Suite and HumanoidBench benchmarks demonstrate that our method, Mean Flow Policy Optimization (MFPO), achieves performance comparable to or exceeding current diffusion-based baselines while considerably reducing training and inference time.  Our code is available at \url{https://github.com/dongxiaoyi-xyz/MFPO}.
  
\end{abstract}

\section{Introduction}
Online reinforcement learning (RL), where agents learn decision-making policies through interactions with an environment, has proven to be a powerful framework for solving continuous control tasks \cite{lillicrap2015continuous,schulman2015trust,schulman2017proximal,haarnoja2018soft,fujimoto2018addressing}. Most existing approaches parameterize the policy as a mapping from states to either deterministic actions or Gaussian distributions over the action space. Although such policy classes are theoretically sufficient to represent optimal policies—one of which must be deterministic—their limited expressiveness often leads to inefficient exploration, thus prohibiting policy optimization. Specifically, in complex control tasks, the reward landscape is typically multi-modal. However, Gaussian policies and deterministic policies augmented with Gaussian noise are inherently unimodal, restricting exploration to a single action region at each state. As a result, other potentially high-reward regions may be overlooked, causing the agent to become trapped in suboptimal local optima.

Recent works have introduced diffusion and flow models as policy representations to address this limitation \cite{yang2023policy,wang2024diffusion,ding2024diffusion}. By generating actions through an iterative process that progressively transforms noise into actions, these generative policies can model highly complex and multi-modal action distributions, leading to improved exploration and strong empirical performance in online RL. However, their practical applicability is hindered by the high computational cost of multi-step iterative generation, which substantially increases training and inference time compared to conventional one-step policies.

To mitigate this efficiency bottleneck, we adopt the recently proposed MeanFlow models \cite{geng2025mean} as policy representations. By shifting the learning target from the instantaneous velocity field to the average velocity along the sampling trajectory, MeanFlow models significantly reduce discretization error under coarse time discretization, enabling high-quality generation with only a few sampling steps. As a result, few-step MeanFlow policies retain the ability to model complex multi-modal action distributions, thereby facilitating efficient exploration across multiple promising action regions, while significantly reducing computational overhead in diffusion-based RL methods.

Furthermore, following the maximum entropy (MaxEnt) RL paradigm, we incorporate policy entropy into the optimization objective to encourage sufficient exploration of the state–action space, and adopt soft policy iteration to optimize MeanFlow policies. However, similar to other diffusion-based methods \cite{celikdime,dongmaximum,maefficient}, applying soft policy iteration to MeanFlow policies introduces two major challenges. First, evaluating the soft Q-function requires computing the action likelihood of a MeanFlow policy, whose exact expression involves an intractable integral over the divergence of the instantaneous velocity field. Second, standard MeanFlow training relies on samples from the target distribution, which are generally unavailable in RL settings.

To address these challenges, we develop an \emph{average divergence network} to approximate the intractable integral appearing in the action likelihood, and introduce an \emph{adaptive instantaneous velocity estimation} method to construct a tractable training objective for MeanFlow policies. The resulting algorithm, termed Mean Flow Policy Optimization (MFPO), enables efficient and principled integration of MeanFlow models into the MaxEnt RL framework. We evaluate MFPO on benchmark tasks from MuJoCo, DeepMind Control Suite and HumanoidBench. Experimental results demonstrate that MFPO matches or surpasses the performance of existing diffusion-based RL algorithms, while requiring significantly fewer sampling steps and substantially less training and inference time.

\section{Related Work}

\textbf{Diffusion and Flow Models.}
Diffusion models \cite{sohl2015deep} have become a dominant paradigm in generative modeling due to their training stability and strong empirical performance \cite{dhariwal2021diffusion,saharia2022photorealistic,brooks2024video}. These models gradually transform data distribution into a tractable prior distribution by adding Gaussian noise, and then train a neural network to reverse this process for sample generation. The reverse process can be formulated as a sequence of Gaussian transitions, a stochastic differential equation (SDE), or an equivalent ordinary differential equation (ODE) \cite{ho2020denoising,song2019generative,songscore}. Flow Matching methods further extend this framework by directly modeling the velocity field of the corresponding ODE \cite{liu2022flow,lipmanflow}. Since diffusion-based and flow-based methods share similar formulations and practical implementations, we collectively refer to them as diffusion models.

Despite the powerful generative capabilities, diffusion models are constrained by their long inference time due to the iterative sampling process. Therefore, a line of research focuses on reducing sampling steps without sacrificing performance. Distillation-based approaches \cite{salimansprogressive,sauer2024adversarial} train a few-step student model to mimic the outputs of a pretrained multi-step teacher model. DPM-Solver \cite{lu2022dpm,lu2025dpm} designs specialized high-order ODE solvers by exploiting the structure of diffusion ODEs. Consistency Models \cite{song2023consistency,songimproved} learn to directly map points along an ODE trajectory to its start point and enforce consistency among predictions from different time points during training. Shortcut Models \cite{fransone} incorporate the step size as an additional condition and impose a self-consistency objective to ensure that models with larger step sizes behave consistently with those using smaller step sizes.

More recently, instead of learning instantaneous velocity fields as in standard flow models, MeanFlow models \cite{geng2025mean} propose to model the average velocity field, which is defined as the time integral of the instantaneous velocity divided by the corresponding time interval. When the average velocity field is learned accurately, samples can be generated without any discretization error even using very few sampling steps. MeanFlow models achieve superior sample quality compared to competing methods and can be trained from scratch without pretraining, distillation, or curriculum learning. Consequently, we adopt MeanFlow Models as policy representations to strike an effective balance among expressivity, inference speed, and simplicity.

\textbf{Diffusion Policies for Online RL.} There is a growing trend toward employing diffusion policies in online RL. In addition to diffusion policy optimization, a key challenge in this setting is to effectively balance exploration and exploitation. DIPO \cite{yang2023policy} clones the actions stored in the replay buffer and improves them by applying Q-function gradients. QVPO \cite{ding2024diffusion} and FPMD \cite{chen2025one} weights the diffusion/flow loss on generated actions with the Q-function for policy improvement, and QVPO introduces an additional diffusion loss on uniformly sampled actions as a regularization term to encourage exploration. GenPO \cite{ding2025genpo} constructs an invertible diffusion model whose log-likelihood can be computed exactly via the change-of-variable formula, enabling compatibility with classical on-policy RL algorithms such as PPO. FlowRL \cite{lv2025flow} combines the Q-loss and the diffusion loss weighted by the optimal Q-function, aiming to optimize policies within the neighborhood of the optimal behavior policy. QSM \cite{psenka2024learning} uses the gradient of the Q-function as the regression target for the score function. DACER \cite{wang2024diffusion} backpropagates Q-function gradients through the entire diffusion chain to optimize the policy parameters. Both QSM and DACER rely on injecting additional Gaussian noise into the generated actions to promote exploration.

Rather than heuristically exploring near the current policy by adding Gaussian noise, maximum entropy reinforcement learning provides a principled framework for balancing exploration and exploitation. However, optimizing diffusion policies under a maximum entropy objective is non-trivial, as both entropy computation and soft policy improvement are intractable for diffusion-based policies. DIME \cite{celikdime} derives a lower bound on the policy entropy and updates the policy by backpropagating gradients through the diffusion chain. To approximate the Boltzmann policy induced by the Q-function, MaxEntDP and SDAC \cite{dongmaximum,maefficient} sample clean actions from a Gaussian distribution conditioned on noisy actions and weight the diffusion loss using the Q-function. RFM \cite{li2026reverse} constructs the policy training objective using a class of posterior mean estimators, and introduces control variates based on Langevin Stein operators to reduce estimator variance. In addition, MaxEntDP estimates the policy entropy via computationally expensive numerical integration. However, these methods degrade when only a few sampling steps are used. Specifically, the entropy lower bound employed by DIME becomes loose under small step budgets, which can adversely affect policy learning. MaxEntDP, SDAC and RFM suffer from significant discretization error when the diffusion ODE is solved using only a limited number of sampling steps. Recently, SAC-Flow \cite{zhang2026sac} parameterizes diffusion policies using sequence architectures, such as GRUs or Transformers, and optimizes them within the Soft Actor-Critic framework~\cite{haarnoja2018soft}. Although this design improves training stability, it relies on specialized architectural choices, thereby increasing implementation complexity.

Currently, most diffusion-based online RL methods require 10–20 sampling steps to achieve strong empirical performance, leading to substantially slower training and inference compared to approaches based on conventional one-step policies. In this work, we investigate the use of few-step MeanFlow policies to improve both training and inference efficiency, and optimize them under the maximum entropy framework to achieve efficient exploration.

\section{Preliminary}
\subsection{Maximum Entropy Reinforcement Learning}

We consider policy learning in continuous action spaces. The environment is modeled as a Markov Decision Process (MDP) defined by the tuple \((\mathcal{S}, \mathcal{A}, p, r, \rho_0, \gamma)\), where \(\mathcal{S}\) and \(\mathcal{A}\) denote the state space and the action space, respectively. The transition dynamics \(p: \mathcal{S} \times \mathcal{S} \times \mathcal{A} \rightarrow \mathbb{R}^+\) defines the probability density function of the next state \(\boldsymbol{s}_{t+1} \in \mathcal{S}\) given the current state \(\boldsymbol{s}_{t} \in \mathcal{S}\) and the action \(\boldsymbol{a}_{t} \in \mathcal{A}\), \(r: \mathcal{S} \times \mathcal{A} \rightarrow [r_{\min}, r_{\max}]\) is a bounded reward function, \(\rho_0\) denotes the initial state distribution, and \(\gamma \in [0,1]\) is the discount factor. Given a policy \(\pi(\boldsymbol{a}_t | \boldsymbol{s}_t)\), we denote the marginal state-action distribution induced by \(\pi\) as \(\rho_{\pi}(\boldsymbol{s}_t, \boldsymbol{a}_t)\). And for notational convenience, the reward \(r(\boldsymbol{s}_t, \boldsymbol{a}_t)\) at time \(t\) is abbreviated as \(r_t\).

Maximum Entropy (MaxEnt) RL augments the standard RL objective by incorporating policy entropy to encourage learning stochastic policies. The resulting objective is
\begin{equation}
    J(\pi)
    =
    \sum_{t=0}^{\infty}
    \gamma^{t}
    \mathbb{E}_{ \rho_{\pi}}
    \left[
    r_t
    +
    \alpha \mathcal{H}(\pi(\cdot | \boldsymbol{s}_t))
    \right],
\end{equation}
where
\(
\mathcal{H}(\pi(\cdot | \boldsymbol{s}_t))
=
\mathbb{E}_{\boldsymbol{a}_t \sim \pi(\cdot | \boldsymbol{s}_t)}
\left[
- \log \pi(\boldsymbol{a}_t | \boldsymbol{s}_t)
\right]
\)
is the policy entropy and \(\alpha \in \mathbb{R}^+\) is a temperature parameter that controls the trade-off between reward maximization and entropy regularization. Under this objective, we are concerned with the soft Q function of a policy \(\pi\), which is defined as
\begin{equation}
\resizebox{0.9\linewidth}{!}{$
Q^{\pi}(\boldsymbol{s}_t, \boldsymbol{a}_t)=r_t+\sum_{l=1}^{\infty}
    \gamma^{l}\mathbb{E}_{\rho_{\pi}}\left[
    r_{t+l}
    +
    \alpha \mathcal{H}(\pi(\cdot | \boldsymbol{s}_{t+l}))
    \right]$.
}
\end{equation}

\subsection{Soft Policy Iteration}
The MaxEnt RL objective can be achieved by applying soft policy iteration \cite{haarnoja2018soft}. This algorithm alternates between policy evaluation and policy improvement, and can converge to the optimal MaxEnt policy under certain assumptions. In the policy evaluation step, the soft Q function of the current policy \(\pi\) is learned by repeatedly applying the soft Bellman operator until convergence:
\begin{equation}
\resizebox{0.9\linewidth}{!}{$
    \mathcal{T}^{\pi}Q(\boldsymbol{s}_t, \boldsymbol{a}_t)\triangleq r_t+\gamma \mathbb{E}[Q(\boldsymbol{s}_{t+1}, \boldsymbol{a}_{t+1})-\alpha \log \pi(\boldsymbol{a}_{t+1}|\boldsymbol{s}_{t+1})]$.
}
\end{equation}
In the policy improvement step, a new policy is obtained by projecting the Boltzmann distribution induced by the learned Q function onto the policy set \(\Pi\) by minimizing the Kullback-Leibler divergence:
\begin{equation}
\label{eq::policy_KL}
\resizebox{0.9\linewidth}{!}{$
    \pi_{\mathrm{new}}=\arg \min_{\pi \in \Pi}
    \mathrm{D}_{\mathrm{KL}}
    \left(
    \pi(\cdot | \boldsymbol{s})
    \;\middle\|\;
    \frac{\exp\left( \frac{1}{\alpha} Q^{\pi_{\mathrm{old}}}(\boldsymbol{s}, \cdot) \right)}
    {Z(\boldsymbol{s})}
    \right)$,
}
\end{equation}
where \(Z(\boldsymbol{s})\) denotes the partition function that normalizes the Boltzmann distribution. It can be proved that \(Q^{\pi_\mathrm{new}}(\boldsymbol{s}, \boldsymbol{a})\geq Q^{\pi_\mathrm{old}}(\boldsymbol{s}, \boldsymbol{a})\) for all \((\boldsymbol{s}, \boldsymbol{a})\in \mathcal{S}\times\mathcal{A}\), indicating that the new policy improves upon the old one. By repeatedly alternating between policy evaluation and policy improvement, the algorithm will converge to the optimal MaxEnt policy in the policy set \(\Pi\) \cite{haarnoja2018soft}.

\subsection{Flow Matching and Mean Flow Models}
Flow Matching is a class of generative models that learns a continuous flow between two probability distributions.
Formally, let \( p_1(\boldsymbol{x}_1) \) denote a simple prior distribution (e.g., a standard Gaussian) and
\( p_0(\boldsymbol{x}_0) \) denote the data distribution.
Flow matching aims to learn a time-dependent velocity field \(
\boldsymbol{v}(\boldsymbol{x}, t): \mathbb{R}^d \times [0,1] \rightarrow \mathbb{R}^d,
\) which defines the ODE
\[
\frac{\mathrm{d}\boldsymbol{x}_t}{\mathrm{d}t} = \boldsymbol{v}(\boldsymbol{x}_t, t).
\]
The induced flow should transport the prior distribution \( p_1 \) at \( t = 1 \) to the data distribution \( p_0 \) at \( t = 0 \). A feasible flow path can be constructed as
\(
\boldsymbol{x}_t = a_t \boldsymbol{x}_0 + b_t \boldsymbol{x}_1,
\)
where \( a_t \) and \( b_t \) are predefined interpolation schedules.
In this paper, we adopt the commonly used linear schedule
\( a_t = 1 - t \) and \( b_t = t \).
Accordingly, the conditional velocity field is given by
\(
\boldsymbol{v}_t(\boldsymbol{x}_t | \boldsymbol{x}_0)
=
\frac{\mathrm{d}\boldsymbol{x}_t}{\mathrm{d}t}
=
\boldsymbol{x}_1 - \boldsymbol{x}_0
\). A parameterized neural network \( \boldsymbol{v}_{\theta} \) is then trained to approximate the marginal velocity field by marginalizing over the conditional velocity. The resulting Conditional Flow Matching (CFM) loss is
\begin{equation}
    \mathcal{L}_{\mathrm{CFM}}(\theta)
    =
    \mathbb{E}_{t, \boldsymbol{x}_0, \boldsymbol{x}_1}
    \left\|
    \boldsymbol{v}_{\theta}(\boldsymbol{x}_t, t)
    -
    \boldsymbol{v}_t(\boldsymbol{x}_t | \boldsymbol{x}_0)
    \right\|^2 .
\end{equation}

After training, new samples are generated by first sampling
\( \boldsymbol{x}_1 \sim p_1 \)
and then solving the learned ODE backward in time from \( t = 1 \) to \( t = 0 \).
The solution can be written as \(
\boldsymbol{x}_0
=
\boldsymbol{x}_1
-
\int_{0}^{1}
\boldsymbol{v}(\boldsymbol{x}_t, t)\,\mathrm{d}t .
\)
In practice, a first-order Euler solver approximates this integral using \( T \) discrete steps:
\[
\boldsymbol{x}_{t_{i-1}}
=
\boldsymbol{x}_{t_i}
-
(t_i - t_{i-1})
\boldsymbol{v}(\boldsymbol{x}_{t_i}, t_i),
\]
where \( \{t_i\}_{i=0}^{T} \) is an increasing sequence in \([0,1]\).
Although effective with sufficiently many steps, the Euler method incurs large discretization errors when \( T \) is small.

To address this limitation, MeanFlow Models~\cite{geng2025mean} propose to directly model the \emph{average velocity field},
defined as the ratio of the time-integrated instantaneous velocity to the time interval:
\begin{equation}
\label{eq:average_velocity_field}
    \boldsymbol{u}(\boldsymbol{x}_t, r, t)
    \triangleq
    \frac{1}{t - r}
    \int_{r}^{t}
    \boldsymbol{v}(\boldsymbol{x}_{\tau}, \tau)\,\mathrm{d}\tau .
\end{equation}
Based on this definition, the instantaneous and average velocity fields satisfy the
\emph{MeanFlow Identity}:
\begin{equation}
\label{eq:meanflow_identity}
    \boldsymbol{u}(\boldsymbol{x}_t, r, t)
    =
    \boldsymbol{v}(\boldsymbol{x}_t, t)
    -
    (t - r)
    \frac{\mathrm{d}}{\mathrm{d}t}
    \boldsymbol{u}(\boldsymbol{x}_t, r, t),
\end{equation}
where the time derivative is given by
\(
\frac{\mathrm{d}}{\mathrm{d}t}
\boldsymbol{u}(\boldsymbol{x}_t, r, t)
=
\boldsymbol{v}(\boldsymbol{x}_t, t)
\partial_{\boldsymbol{x}} \boldsymbol{u}
+
\partial_t \boldsymbol{u}
\). The right-hand side of Eq.~\eqref{eq:meanflow_identity} is used as the training target for the average velocity network
\( \boldsymbol{u}_{\theta} \).
The training objective is
\begin{equation}
\label{eq::loss_meanflow}
    \mathcal{L}(\theta)
    =
    \mathbb{E}_{r, t, \boldsymbol{x}_t}
    \left\|
    \boldsymbol{u}_{\theta}(\boldsymbol{x}_t, r, t)
    -
    \operatorname{sg}(\boldsymbol{u}_{\mathrm{tgt}})
    \right\|_2^2,
\end{equation}
where
\(
\boldsymbol{u}_{\mathrm{tgt}}
=
\boldsymbol{v}(\boldsymbol{x}_t, t)
-
(t - r)
\left(
\boldsymbol{v}(\boldsymbol{x}_t, t)\partial_{\boldsymbol{x}} \boldsymbol{u}_{\theta}
+
\partial_t \boldsymbol{u}_{\theta}
\right)
\), and \( \operatorname{sg}(\cdot) \) denotes the stop-gradient operator. In practice, the marginal velocity filed can be replaced by its unbiased estimator, the conditional velocity field to compute the training target. Once trained, sampling can be performed by replacing the time integral with the average velocity:
\[
\boldsymbol{x}_r
=
\boldsymbol{x}_t
-
(t - r)
\boldsymbol{u}(\boldsymbol{x}_t, r, t).
\]
Unlike the instantaneous velocity field, integrating the average velocity yields an accurate approximation of the trajectory integral regardless of the number of sampling steps.

\section{Method}
In this paper, we adopt MeanFlow models as the policy representation and employ soft policy iteration to optimize the MaxEnt RL objective. We first present the formulation of MeanFlow policies in Sec.~\ref{sec::meanflow_policy}. To address the challenges arising from their integration into soft policy iteration, we introduce an \emph{average divergence network} to approximate the action likelihood of MeanFlow policies in Sec.~\ref{sec::MeanDiv}, and propose an \emph{adaptive instantaneous velocity estimation} method to enable effective policy improvement in Sec.~\ref{sec::InstVelEstimation}. Finally, we describe a practical implementation of the proposed algorithm in Sec.~\ref{sec::MFPO}.

\subsection{Mean Flow Policy Representation}
\label{sec::meanflow_policy}
Formally, we represent the policy as the solution to the ODE
\begin{equation}
    \frac{\mathrm{d}\boldsymbol{a}_t}{\mathrm{d}t}
    =
    \boldsymbol{v}(\boldsymbol{s},\boldsymbol{a}_t, t).
\end{equation}
Starting from a prior sample \( \boldsymbol{a}_1 \sim p_1 \), the solution can be given by \(\boldsymbol{a}_0=\boldsymbol{a}_1-\int_0^1\boldsymbol{v}(\boldsymbol{s},\boldsymbol{a}_t,t)dt\).
Note in this paper, the variable \( t \) denotes the ODE time rather than RL time index unless specified. \(\boldsymbol{v}\) is called the instantaneous velocity filed and the corresponding average velocity filed is defined as
\begin{equation}
\label{eq::avg_veocity}
    \boldsymbol{u}(\boldsymbol{s},\boldsymbol{a}_t, r, t)
    =
    \frac{1}{t - r}
    \int_{r}^{t}
    \boldsymbol{v}(\boldsymbol{s},\boldsymbol{a}_{\tau}, \tau) \,\mathrm{d}\tau.
\end{equation}
A neural network \(\boldsymbol{u}_{\theta}\) parameterized by \(\theta\) is trained to fit the average velocity filed. According to Eq.~\eqref{eq::avg_veocity}, a parameterized instantaneous velocity filed can be obtained by setting \(r=t\): \(\boldsymbol{v}_{\theta}(\boldsymbol{s},\boldsymbol{a}_t, t)=\boldsymbol{u}_{\theta}(\boldsymbol{s},\boldsymbol{a}_t, t, t)\). After training, actions are generated by accumulating the average velocity over a finite number of sampling steps, which serves as an approximation to the integral of the instantaneous velocity:
\begin{equation}
\label{eq::meanflow_sample}
    \boldsymbol{a}_{t_{i-1}}
    =
    \boldsymbol{a}_{t_i}
    -
    \frac{1}{T}
    \boldsymbol{u}_{\theta}(\boldsymbol{s}, \boldsymbol{a}_{t_i}, t_{i-1}, t_i),
    \quad
    i = T, \ldots, 1,
\end{equation}
where \( T \) is the number of sampling steps and \( t_i = \frac{i}{T} \).
We refer to the induced action distribution conditioned on state \( \boldsymbol{s} \) as the \emph{MeanFlow policy} and denote it by
\( \pi_{\theta}(\boldsymbol{a}_0 | \boldsymbol{s}) \).

\subsection{Training Average Divergence Network for Action Likelihood Approximation}
\label{sec::MeanDiv}
Using the instantaneous change-of-variable formula~\cite{chen2018neural},
the action likelihood of a MeanFlow policy can be expressed as
\begin{equation}
    \log \pi_{\theta}(\boldsymbol{a}_0 | \boldsymbol{s})
    =
    \log p_1(\boldsymbol{a}_1)
    +
    \int_{0}^{1}
    \nabla \cdot \boldsymbol{v}_{\theta}(\boldsymbol{s}, \boldsymbol{a}_t, t)
    \,\mathrm{d}t .
\end{equation}
Here, \( \nabla \cdot \) denotes the divergence operator with respect to the action variable, which can be written as the trace of the Jacobian of the instantaneous velocity field:
\begin{equation}
\label{eq::div_trace}
    \nabla \cdot \boldsymbol{v}_{\theta}(\boldsymbol{s}, \boldsymbol{a}_t, t)
    =
    \mathrm{tr}
    \left(
    \frac{\partial \boldsymbol{v}_{\theta}(\boldsymbol{s}, \boldsymbol{a}_t, t)}
    {\partial \boldsymbol{a}_t}
    \right).
\end{equation}

A naive approach to computing the action likelihood is to explicitly evaluate the Jacobian in Eq.~\eqref{eq::div_trace} and numerically integrate the divergence over time.
However, both Jacobian computation and time integration are computationally expensive, which would substantially slow down soft policy iteration.
Therefore, we seek a more efficient method for likelihood estimation.

Inspired by the concept of average velocity in MeanFlow models, we introduce an \emph{average divergence network}
\( \delta_{\omega} \)
to approximate the time-averaged divergence:
\begin{equation}
    \delta(\boldsymbol{s}, \boldsymbol{a}_t, r, t)
    =
    \frac{1}{t - r}
    \int_{r}^{t}
    \nabla \cdot \boldsymbol{v}_{\theta}(\boldsymbol{s}, \boldsymbol{a}_{\tau}, \tau)
    \,\mathrm{d}\tau .
\end{equation}
This definition mirrors that of the average velocity field, with the instantaneous velocity replaced by the instantaneous divergence.
According to MeanFlow theory, training the average divergence network requires an unbiased estimator of the instantaneous divergence.

Direct computation of the Jacobian trace requires \( d \) backward passes through the network, where \( d \) is the action dimension.
Instead, we adopt the Skilling--Hutchinson trace estimator~\cite{skilling1989eigenvalues,hutchinson1989stochastic}:
\begin{equation}
    \boldsymbol{\epsilon}^{\top}
    \frac{\partial \boldsymbol{v}_{\theta}(\boldsymbol{s}, \boldsymbol{a}_t, t)}
    {\partial \boldsymbol{a}_t}
    \boldsymbol{\epsilon},
\end{equation}
where
\( \boldsymbol{\epsilon} \)
is sampled from a distribution satisfying
\( \mathbb{E}[\boldsymbol{\epsilon}] = \boldsymbol{0} \)
and
\( \mathrm{Cov}(\boldsymbol{\epsilon}) = I \).
This estimator is unbiased and can be efficiently computed using a single forward pass with forward-mode automatic differentiation.
In practice, averaging multiple samples reduces variance and improves training stability.
The resulting estimator is
\begin{equation}
\label{eq::div_estimation}
    \widehat{\mathrm{div}}
    (\boldsymbol{s}, \boldsymbol{a}_t, t, \boldsymbol{\epsilon}_{1:N})
    =
    \frac{1}{N}
    \sum_{i=1}^{N}
    \boldsymbol{\epsilon}_i^{\top}
    \frac{\partial \boldsymbol{v}_{\theta}(\boldsymbol{s}, \boldsymbol{a}_t, t)}
    {\partial \boldsymbol{a}_t}
    \boldsymbol{\epsilon}_i,
\end{equation}
where
\( \{ \boldsymbol{\epsilon}_i \}_{i=1}^{N} \sim p(\boldsymbol{\epsilon}) \),
and \( N \) denotes the number of samples used for divergence estimation.

Following the construction of the MeanFlow loss, the training objective for the average divergence network is
\begin{equation}
\label{eq::loss_avgdiv}
    \mathcal{L}(\omega)
    =
    \mathbb{E}_{\boldsymbol{s}, r, t, \boldsymbol{a}_t, \boldsymbol{\epsilon}_{1:N}}
    \left\|
    \delta_{\omega}(\boldsymbol{s}, \boldsymbol{a}_t, r, t)
    -
    \operatorname{sg}(\delta_{\mathrm{tgt}})
    \right\|_2^2,
\end{equation}
where
\[
\delta_{\mathrm{tgt}}
=
\widehat{\mathrm{div}}
-
(t - r)
\left(
\boldsymbol{u}_{\theta}(\boldsymbol{s},\boldsymbol{a}_t, t, t)\,
\partial_{\boldsymbol{a}} \delta_{\omega}
+
\partial_t \delta_{\omega}
\right).
\]
The full derivation is provided in the App. \ref{appdx::loss_avg_div}. Once trained, the average divergence network can be incorporated into the MeanFlow sampling procedure to efficiently compute action likelihoods:
\begin{equation}
\resizebox{0.89\linewidth}{!}{$
    \log \pi_{\theta}(\boldsymbol{a}_{0} | \boldsymbol{s})
    =
    \log p_1(\boldsymbol{a}_{1})
    +
    \frac{1}{T}\sum_{i=1}^T
    \delta_{\omega}(\boldsymbol{s}, \boldsymbol{a}_{t_i}, t_{i-1}, t_i),
$}
\end{equation}
where
\( \{ \boldsymbol{a}_{t_i} \}_{i=0}^{T} \)
are samples generated using Eq.~\ref{eq::meanflow_sample}.
The training procedure for the average divergence network and the likelihood-aware sampling process are summarized in Alg.~\ref{alg::avgdiv_train} and \ref{alg::sample}, respectively.
Empirically, we find that the proposed average divergence network provides accurate likelihood estimation while introducing only a 5\% increase in training time.


\subsection{Adaptive Instantaneous Velocity Estimation with Self-Normalized Importance Sampling}
\label{sec::InstVelEstimation}

In the policy improvement step, the target distribution of the MeanFlow policy is the Boltzmann distribution induced by the soft Q-function:
\begin{equation}
    \pi(\boldsymbol{a}_0 | \boldsymbol{s})
    =
    \frac{\exp\!\left(\frac{1}{\alpha} Q(\boldsymbol{s}, \boldsymbol{a}_0)\right)}{Z(\boldsymbol{s})},
\end{equation}
where \(Z(\boldsymbol{s})\) denotes the normalizing constant. In this subsection, we describe how to optimize the average velocity network to approximate this target distribution.

As shown in Eq.~\ref{eq::loss_meanflow}, training MeanFlow models only requires an unbiased estimator of the instantaneous velocity field of the target distribution. This velocity field can be obtained by marginalizing the conditional velocity field as
\begin{align}
    \boldsymbol{v}_t(\boldsymbol{a}_t | \boldsymbol{s})
    &=
    \mathbb{E}_{\pi(\boldsymbol{a}_0 | \boldsymbol{a}_t, \boldsymbol{s})}
    \bigl[
        \boldsymbol{v}_t(\boldsymbol{a}_t | \boldsymbol{a}_0, \boldsymbol{s})
    \bigr] \\
    &=
    \mathbb{E}_{\pi(\boldsymbol{a}_0, \boldsymbol{a}_1 | \boldsymbol{a}_t, \boldsymbol{s})}
    \bigl[
        \boldsymbol{a}_1 - \boldsymbol{a}_0
    \bigr].
\end{align}

Standard MeanFlow training draws samples from the target distribution and compute the corresponding conditional velocity as a unbiased estimator of marginal velocity filed. However, in the policy improvement step, the target distribution is defined implicitly by the Q-function, and direct sampling is unavailable. To address this issue, we further analyze the marginal velocity field. Substituting the relation
\(
\boldsymbol{a}_1 = \frac{\boldsymbol{a}_t - (1 - t)\boldsymbol{a}_0}{t}
\)
into the above equation yields
\begin{equation}
    \boldsymbol{v}_t(\boldsymbol{a}_t | \boldsymbol{s})
    =
    \mathbb{E}_{\pi(\boldsymbol{a}_0 | \boldsymbol{a}_t, \boldsymbol{s})}
    \left[
        \frac{\boldsymbol{a}_t - \boldsymbol{a}_0}{t}
    \right].
\end{equation}

MaxEntDP and SDAC~\cite{dongmaximum,maefficient} show that the conditional distribution associated with the Boltzmann policy admits the following form:
\begin{align}
\resizebox{0.89\linewidth}{!}{$
\label{eq::prob_decomposition}
    \pi(\boldsymbol{a}_0 | \boldsymbol{a}_t, \boldsymbol{s})
    \propto
    \exp\!\left(\frac{1}{\alpha} Q(\boldsymbol{s}, \boldsymbol{a}_0)\right)
    \mathcal{N}\!\left(
        \boldsymbol{a}_0 \;\middle|\;
        \frac{\boldsymbol{a}_t}{1 - t},
        \left(\frac{t}{1 - t}\right)^2 I
    \right).
$}
\end{align}

The complete derivation is provided in App.~\ref{appdx::condition_distribution}. Let
\(
q(\boldsymbol{a}_0)
=
\mathcal{N}\!\left(
\boldsymbol{a}_0 | \frac{\boldsymbol{a}_t}{1 - t},
(\frac{t}{1 - t})^2 I
\right)
\), similar to MaxEntDP, we can draw samples \(\boldsymbol{a}_0 \sim q(\boldsymbol{a}_0)\) and estimate the marginal velocity field using self-normalized importance sampling (SNIS):
\begin{align}
    \boldsymbol{v}_t(\boldsymbol{a}_t | \boldsymbol{s})
    &=
    \mathbb{E}_{q(\boldsymbol{a}_0)}
    \left[
        \frac{\pi(\boldsymbol{a}_0 | \boldsymbol{a}_t, \boldsymbol{s})}{q(\boldsymbol{a}_0)}
        \cdot
        \frac{\boldsymbol{a}_t - \boldsymbol{a}_0}{t}
    \right] \\
    &\approx
    \sum_{i=1}^K
    \frac{w(\boldsymbol{a}_0^i)}{\sum_{j=1}^K w(\boldsymbol{a}_0^j)}
    \cdot
    \frac{\boldsymbol{a}_t - \boldsymbol{a}_0^i}{t},
    \label{eq::snis}
\end{align}
where the importance weight \(
w(\boldsymbol{a}_0)
=
\frac{\pi(\boldsymbol{a}_0 | \boldsymbol{a}_t, \boldsymbol{s})}{q(\boldsymbol{a}_0)}
\propto
\exp\!\left(\frac{1}{\alpha} Q(\boldsymbol{s}, \boldsymbol{a}_0)\right)
\),
and \(K\) denotes the number of samples.

However, this SNIS estimator is not efficient for all \(t\in[0,1]\). Its relative efficiency compared to a Monte Carlo estimator using samples from \(\pi(\boldsymbol{a}_0 | \boldsymbol{a}_t, \boldsymbol{s})\) can be quantified by the Effective Sample Size (ESS), which measures the number of Monte Carlo samples to ensure that the variance of the Monte Carlo estimator is the same as the SNIS estimator with \(K\) samples \cite{kong1992note,metelli2020importance}. An brief introduction to ESS is provided in App.~\ref{appdx::ess}. Therefore, given the budget of \(K\) samples, a SNIS estimator with a larger ESS has smaller estimation variance and is more efficient. The ESS of SNIS can be estimated as
\begin{equation}
    \widehat{\mathrm{ESS}}(\pi \| q)
    =
    \frac{\left(\sum_{i=1}^K w(\boldsymbol{a}_0^i)\right)^2}
         {\sum_{i=1}^K w^2(\boldsymbol{a}_0^i)}.
\end{equation}

When the proposal distribution \(q\) gradually deviates from the target distribution \(\pi\), the ESS degrades from \(K\) to \(1\). Therefore, a proposal distribution that is sufficiently close to the target distribution is desired. According to Eq.~\ref{eq::prob_decomposition}, when \(t\) is close to zero, the target distribution is sharply concentrated due to the small variance of the Gaussian proposal, and the Gaussian proposal closely matches the target distribution. In contrast, when \(t\) approaches one, the Gaussian proposal becomes much flatter and the target distribution is increasingly dominated by the exponential Q-function term, leading to a poor match between \(q\) and the target distribution, and consequently, low ESS.

To improve estimation efficiency for large \(t\), we introduce a second proposal distribution. This proposal should (i) be easy to sample from, (ii) admit tractable likelihood computation, and (iii) be close to the Boltzmann distribution induced by the Q-function which is neglected by the Gaussian proposal. With the likelihood estimation method developed in Sec.\ref{sec::MeanDiv}, the current policy distribution \(\pi_{\theta}(\boldsymbol{a}_0|\boldsymbol{s})\) satisfies all three requirements. As policy optimization proceeds, this distribution increasingly concentrates on high-Q regions, making it a suitable proposal. Moreover, to adaptively select the estimators based on their efficiency, we therefore combine two SNIS estimators using ESS as weights. Specifically, we define
\(
q^1(\boldsymbol{a}_0)
=
\pi_{\theta}(\boldsymbol{a}_0 | \boldsymbol{s}).
\)
and
\(
q^2(\boldsymbol{a}_0)
=
\mathcal{N}\!\left(
\boldsymbol{a}_0 | \frac{\boldsymbol{a}_t}{1 - t},
(\frac{t}{1 - t})^2 I
\right)
\).
The final estimator of the marginal velocity field is
\begin{equation}
    \hat{\boldsymbol{v}}_t(\boldsymbol{a}_t | \boldsymbol{s})
    =
    \sum_{k=1}^2
    \frac{\mathrm{ESS}_k}{\sum_{l=1}^2 \mathrm{ESS}_l}
    \hat{\boldsymbol{v}}_t^k(\boldsymbol{a}_t | \boldsymbol{s}),
\end{equation}
where each \(\hat{\boldsymbol{v}}_t^k\) is computed using Eq.~\ref{eq::snis} with importance weights
\(
w^k(\boldsymbol{a}_0)
=
\frac{\pi(\boldsymbol{a}_0 | \boldsymbol{a}_t, \boldsymbol{s})}{q^k(\boldsymbol{a}_0)}
\).
The normalizing constant of \(\pi(\boldsymbol{a}_0 | \boldsymbol{a}_t, \boldsymbol{s})\) cancels out in both SNIS and ESS computation and therefore does not need to be evaluated. A detailed bias and variance analysis of the proposed estimator is provided in App.~\ref{appdx::bias_var}.

Finally, the training objective of the MeanFlow policy is given by
\begin{equation}
\label{eq::loss_meanflowpolicy}
    \mathcal{L}(\theta)
    =
    \mathbb{E}_{\boldsymbol{s}, r, t, \boldsymbol{a}_t}
    \left\|
        \boldsymbol{u}_{\theta}(\boldsymbol{s}, \boldsymbol{a}_t, r, t)
        -
        \operatorname{sg}(\boldsymbol{u}_{\mathrm{tgt}})
    \right\|_2^2,
\end{equation}
where
\(
\boldsymbol{u}_{\mathrm{tgt}}
=
\hat{\boldsymbol{v}}_t
-
(t - r)
\left(
\hat{\boldsymbol{v}}_t \, \partial_{\boldsymbol{a}} \boldsymbol{u}_{\theta}
+
\partial_t \boldsymbol{u}_{\theta}
\right).
\)

In Fig.~\ref{pic::ess_var}, we empirically evaluate the ESS and the estimation variance of the two proposal distributions across different values of \(t\). It shows that the ESS of the Gaussian proposal \(q^2\) drops greatly when \(t\) becomes larger, while the policy proposal \(q^1\) still has a relatively high ESS. By combining the two SNIS estimators, the estimation variance becomes less than that of any single estimator.

\begin{figure}[ht]
  \centering
  \includegraphics[width=\columnwidth]{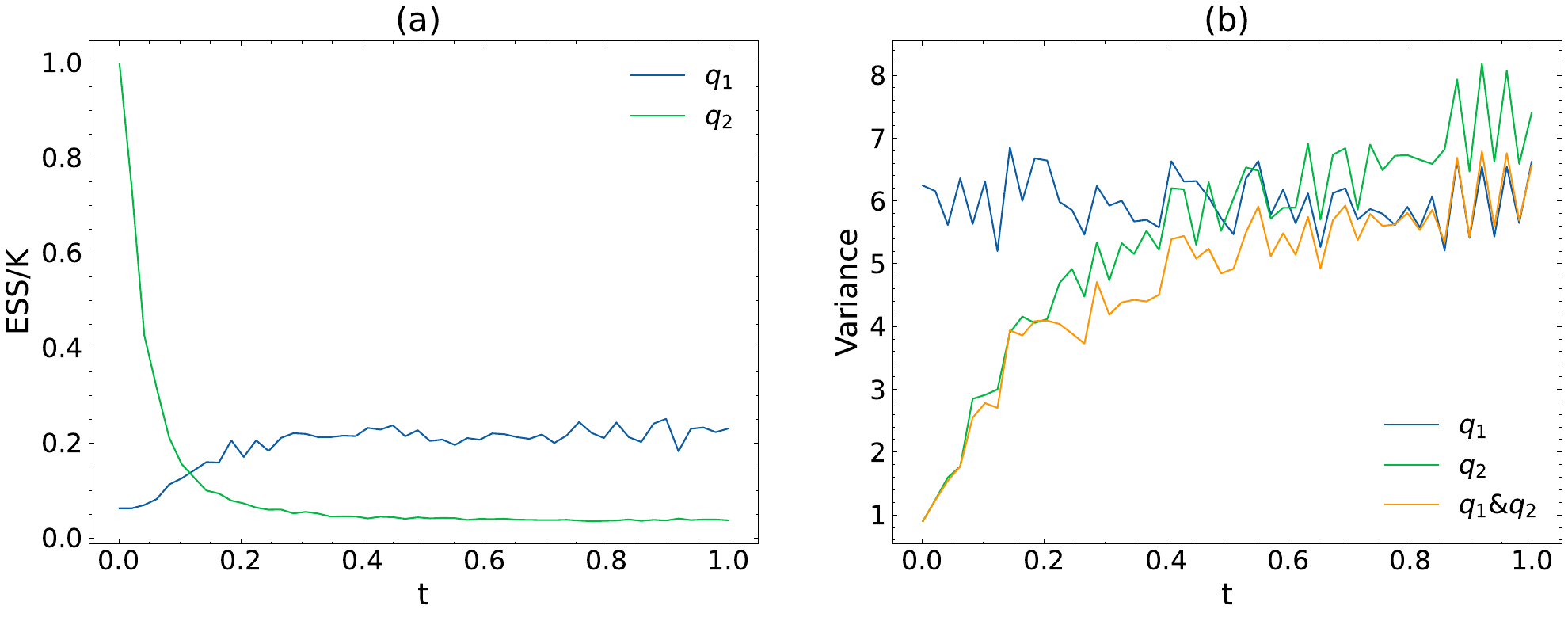}
  \caption{
    Effective sample size and estimation variance of two SNIS estimators under different proposals, evaluated on the HalfCheetah-v3 benchmark after 120k training iterations.
    (a) Normalized effective sample size (ESS divided by the number of samples), where larger values indicate more efficient proposals.
    (b) Estimation variance of the estimators, computed using the same number of samples.
  }
  \label{pic::ess_var}
  \vspace{-0.1in}
\end{figure}


\begin{figure*}[ht]
  \centering
  \includegraphics[width=1.9\columnwidth]{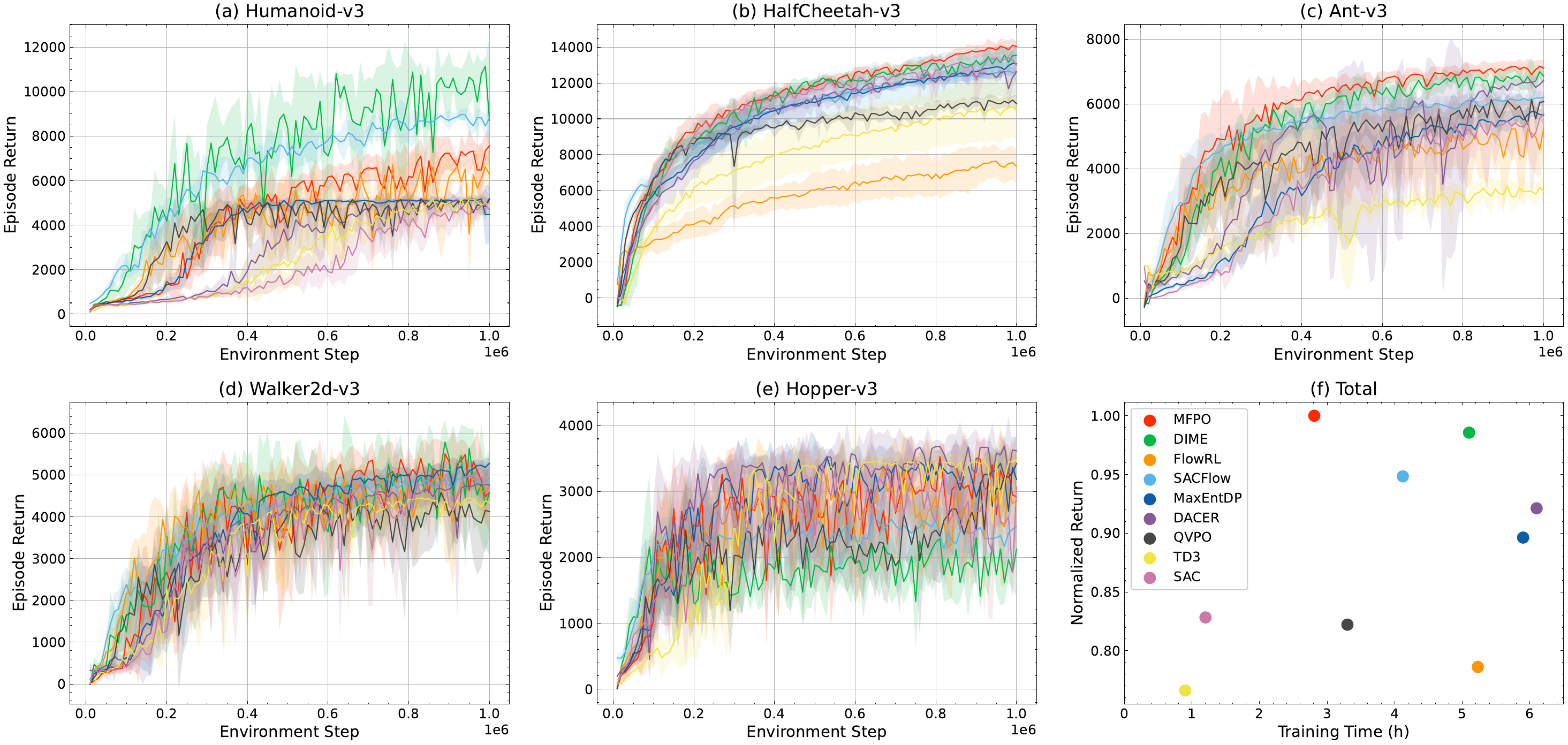}
  \caption{
    Comparison of different algorithms on 5 MuJoCo locomotion benchmarks.
    (a--e) Learning curves averaged over 5 random seeds; the shaded regions denote the standard deviation.
    (f) Average normalized return versus training time across all tasks.
    The normalized return on each task is computed by dividing the average return over the last 10\% of environment steps by that of MFPO.
    Methods closer to the upper-left region exhibit both higher performance and greater time efficiency.
  }
  \label{pic::cmp_baseline}
\end{figure*}

\begin{table*}[t]
  \caption{Sampling steps and average inference latency per sample of different algorithms, evaluated on Mujoco  benchmarks.}
  \label{tab::sample_step}
  \centering
  \small
  \begin{tabular}{lcccccccccc}
    \toprule
    Algorithm & MFPO & DIME & FlowRL & SAC-Flow & MaxEntDP & DACER & QVPO & TD3 & SAC \\
    \midrule
    Sampling Steps 
      & 2
      & 16 
      & 1\hyperlink{fn:FlowRL}{\textsuperscript{1}}
      & 4
      & 20 
      & 20 
      & 20 
      & 1 
      & 1  \\
    Inference Time (ms) 
    & 0.46 
      & 0.97 & 0.42 & 0.96 & 1.56 & 1.06 & 1.68 & 0.14 & 0.15  \\
    \bottomrule
  \end{tabular}
\end{table*}

\subsection{A Practical Implementation}
\label{sec::MFPO}
In this subsection, we present a practical implementation of the proposed Mean Flow Policy Optimization (MFPO) algorithm. We employ two parameterized neural networks: \(Q_{\phi}\) to approximate the Q function and \(\boldsymbol{u}_{\theta}\) to represent the MeanFlow policy. The algorithm follows the soft policy iteration framework, alternating between policy evaluation and policy improvement to iteratively optimize the MeanFlow policy. 

In the policy evaluation step, the Q-function is trained by minimizing the soft Bellman error
\begin{equation}
\label{eq::loss_softQ}
\resizebox{0.89\linewidth}{!}{$
    \mathcal{L}(\phi)=\mathbb{E}_{(\boldsymbol{s},\boldsymbol{a}, \boldsymbol{s}^\prime)\sim \mathcal{B}}\left[\frac{1}{2}\big(Q_{\phi}(\boldsymbol{s},\boldsymbol{a})-Q_{\mathrm{target}}(\boldsymbol{s},\boldsymbol{a})\big)^2\right],
$}
\end{equation}
where \(\mathcal{B}\) denotes the replay buffer that stores past interactions, and the target Q value is defined as
\(
Q_{\mathrm{target}}(\boldsymbol{s},\boldsymbol{a}) = r(\boldsymbol{s},\boldsymbol{a}) 
+ \gamma  
\left( Q_{\phi}(\boldsymbol{s}^{\prime},\boldsymbol{a}^{\prime}) 
- \alpha \log \pi_{\theta}(\boldsymbol{a}^{\prime}| \boldsymbol{s}^{\prime}) \right).
\)
When computing the target Q value, the actions in the next timestep and their corresponding likelihoods are obtained according to Alg.~\ref{alg::sample}.

In the policy improvement step, we optimize the MeanFlow policy using the loss defined in Eq.~\eqref{eq::loss_meanflowpolicy}, which encourages the policy to approximate the Boltzmann distribution induced by the learned Q-function. Although the minimizer of this objective generally does not coincide with that of the KL divergence in Eq.~\eqref{eq::policy_KL} when the Boltzmann distribution lies outside the policy class \(\Pi\), we assume that the solutions are close in practice due to the expressiveness of the MeanFlow policy class. After updating the policy network, the average diverge network is trained using the objective in Eq.~\eqref{eq::loss_avgdiv}. The complete MFPO algorithm is summarized in Alg.~\ref{alg::mfpo}.

\begin{figure*}[ht]
  \centering
  \includegraphics[width=2\columnwidth]{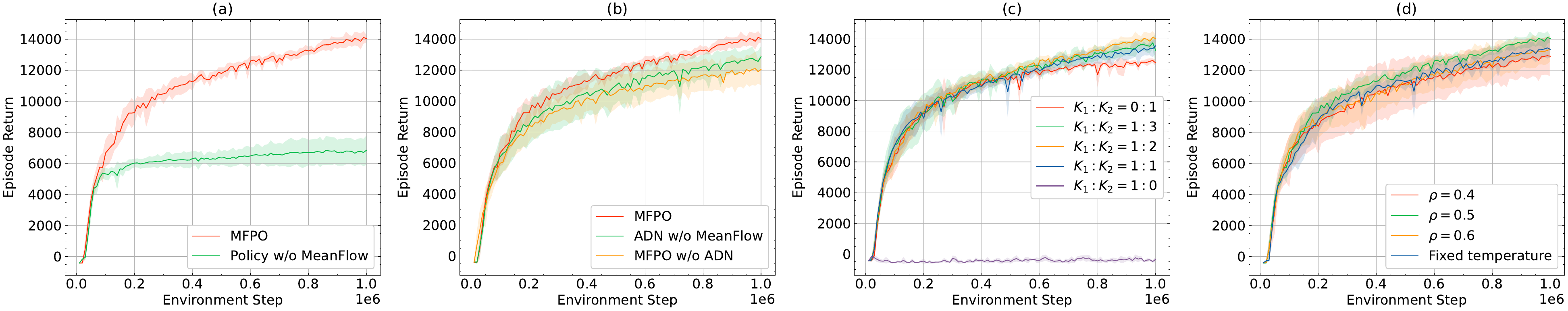}
  \caption{
    Ablation studies on the HalfCheetah-v3 benchmark.
    (a) Learning curves comparing policies that model average velocity versus instantaneous velocity.
    (b) Learning curves for action likelihood estimation using the average divergence network (ADN), the instantaneous divergence network, and no divergence network.
    (c) Learning curves under different sampling ratios of the two proposals in adaptive instantaneous velocity estimation.
    (d) Learning curves with different target entropy coefficients \(\rho\) and a fixed temperature.
    The fixed temperature is set to the average temperature over the training process obtained with \(\rho = 0.5\).
  }
  \label{pic::ablation}
  \vspace{-0.1in}
\end{figure*}

In addition, to further enhance the performance and stability of MFPO, we incorporate the following techniques.

\textbf{Auto-tuning Temperature.}
The temperature coefficient \(\alpha\) is a crucial hyperparameter that controls the trade-off between exploration and exploitation. Following SAC~\cite{haarnoja2018soft}, we automatically tune \(\alpha\) to match a predefined target policy entropy \(\mathcal{H}_{\mathrm{target}}\), thereby alleviating the need for manual tuning. Specifically, \(\alpha\) is optimized by minimizing
\begin{equation}
    \mathcal{L}(\alpha)=\alpha\big(\mathcal{H}(\pi_{\theta})-\mathcal{H}_{\mathrm{target}}\big),
\end{equation}
where the policy entropy \(\mathcal{H}(\pi_{\theta})\) is estimated by averaging the negative log-likelihood of actions generated by the MeanFlow policy.

\textbf{Distributional Critic.}
By explicitly modeling the distribution of returns, distributional Q-learning~\cite{bellemare2017distributional,duan2021distributional} has been shown to yield more stable and accurate policy evaluation. It has also demonstrated effectiveness in prior diffusion-based methods~\cite{wang2024diffusion,celikdime}. Following C51~\cite{bellemare2017distributional}, we represent the Q-function as a categorical distribution over a discrete set of return values, and use the mean of this distribution as the final scalar Q value for policy optimization.

\textbf{Action Selection.}
While stochastic policies are beneficial for exploration during training, they may underperform the deterministic policies at test time. To address this issue, we adopt the action selection technique~\cite{mao2024diffusion,ding2024diffusion,espinosa2025scaling}. During evaluation, the MeanFlow policy generates multiple candidate actions, and the action with the highest estimated Q value is selected to interact with the environment.

\footnotetext[1]{\hypertarget{fn:FlowRL}{}FlowRL adopts a midpoint ODE solver; thus, one sampling step requires two forward network evaluations.}

\section{Experiments}

In this section, we evaluate the performance of the proposed MFPO approach. We conduct experiments on 5 locomotion tasks from MuJoCo \cite{todorov2012mujoco}, 6 hard tasks from DeepMind Control Suite \cite{tassa2018deepmind}, and 3 high-dimensional tasks from HumanoidBench \cite{sferrazza2024humanoidbench}. Comparisons with state-of-the-art baseline algorithms are presented in Sec.~\ref{sec::comparison}, and ablation studies analyzing the key components of MFPO are reported in Sec.~\ref{sec::ablation}.

\subsection{Comparative Evaluation}
\label{sec::comparison}
To demonstrate the superiority of our approach, we compare MFPO with 6 diffusion-based algorithms—DIME \cite{celikdime}, FlowRL \cite{lv2025flow}, SAC-Flow \cite{zhang2026sac}, MaxEntDP \cite{dongmaximum}, DACER \cite{wang2024diffusion}, and QVPO \cite{ding2024diffusion}—as well as 2 classical baselines employing Gaussian and deterministic policies, SAC \cite{haarnoja2018soft} and TD3 \cite{fujimoto2018addressing}. Performance and training time are reported in Fig.~\ref{pic::cmp_baseline}, Fig.~\ref{pic::cmp_baseline_dmc} and Fig.~\ref{pic::cmp_baseline_hb}, while the sampling steps and inference time are provided in Table~\ref{tab::sample_step}. The results show that MFPO achieves comparable or superior performance to competing methods on most tasks. Notably, MFPO reduces training time by approximately 50\% compared to diffusion-based policies, substantially narrowing the efficiency gap between diffusion policies and conventional one-step policies.

\subsection{Ablation Studies}
\label{sec::ablation}
\textbf{MeanFlow Models.} MFPO employs MeanFlow models for both policy representation and divergence integral in action likelihood estimation. To assess the benefit of modeling the average velocity field, we replace these MeanFlow models with standard flow-matching models. As shown in Fig.~\ref{pic::ablation}(a) and (b), substituting the MeanFlow models in either the policy or the divergence integral leads to a noticeable performance degradation. These results highlight the importance of learning the average velocity in few-step settings, where it effectively mitigates discretization errors.

\textbf{Action Likelihood Estimation.} Both policy entropy evaluation and importance weight computation for the policy proposal depend on accurate action likelihood estimation. To examine the effect of the average divergence network (ADN), we remove its contribution by setting the divergence integral in the action likelihood expression to zero. The corresponding results are presented in Fig.~\ref{pic::ablation}(b). The MFPO variant without ADN underperforms the full model, demonstrating that precise action likelihood estimation enabled by ADN is critical to stable and effective policy optimization.

\textbf{Adaptive Instantaneous Velocity Estimation.} We investigate the impact of the sample ratio between the two proposals used in adaptive instantaneous velocity estimation, with results shown in Fig.~\ref{pic::ablation}(c). Combining the SNIS estimators from the policy proposal and the Gaussian proposal yields a more stable estimate of the instantaneous velocity, leading to improved performance compared to using only the Gaussian proposal. In contrast, relying solely on the policy proposal fails to produce a viable policy since this proposal becomes effective only when the current policy can already generate high-Q actions, which is difficult without assistance from the Gaussian proposal. Furthermore, we observe that allocating slightly more samples to the Gaussian proposal (\(K_1:K_2=1:2\)) results in better performance. This may be attributed to the increased sample budget enabling the policy proposal to become effective more rapidly.

\textbf{Auto-tuning Temperature.} We set the target policy entropy proportional to the action-space dimension as \(\mathcal{H}_{\mathrm{target}}=-\rho \cdot \mathrm{dim}(\mathcal{A})\), where \(\rho\) is a task-independent constant controlling the trade-off between exploration and exploitation. The performance under different values of \(\rho\) as well as with a fixed temperature, is shown in Fig.~\ref{pic::ablation}(d). Too high \(\rho\) may suppress exploration of potential high-return action regions, and too low \(\rho\) may compromise reward maximization. Empirically, we find \(\rho=0.5\) provides a robust choice across most tasks. Moreover, compared to using a fixed temperature, which may suffer from insufficient exploration as reward scales increase during training, automatic temperature tuning maintains a consistent exploration level throughout training, facilitating stable policy improvement.

More ablation studies and hyperparameter analysis are provided in App.~\ref{apdx::ablation} and \ref{apdx::hyperparameter}.

\section{Conclusion}
In this paper, we propose MFPO, a method that employs MeanFlow models as policy representations within the MaxEnt RL framework. By adopting few-step MeanFlow policies, MFPO achieves high expressivity with fast inference, substantially improving the training and inference speed of diffusion-based RL algorithms. To address the optimization challenges of MeanFlow policies, we introduce an average divergence network to approximate action likelihoods and propose an adaptive instantaneous velocity estimation method to construct the policy loss. Extensive experiments show that MFPO matches or outperforms diffusion-based baselines while requiring significantly less training time, inference time, and fewer sampling steps.

\textbf{Limitations and Future Work.} Although MFPO requires fewer sampling steps than existing diffusion-based algorithms, it still requires two sampling steps to achieve high performance. Future work will explore more advanced policy optimization techniques or integrate more powerful generative models to reduce the sampling steps to one.


\section*{Acknowledgements}
This work was supported by the National Key Research and Development Program of China No. 2021ZD0201504 and the CAAI-Qbosan Fund 2025CAAI-QBOSAN-06.



\section*{Impact Statement}
Diffusion-based policies are now widely adopted in Vision–Language–Action (VLA) systems for robot control, but often suffer from high computational cost. By optimizing few-step MeanFlow policies, our method improves training and inference efficiency while retaining high performance. This efficiency enables faster and more responsive robotic control and reduces computational and energy demands. Consequently, it may facilitate real-world deployment and broaden access to advanced robotic systems.

\bibliography{example_paper}
\bibliographystyle{icml2026}

\newpage
\appendix
\onecolumn

\section{Supplementary Related Work}

\subsection{Diffusion Policies for Offline and Offline2Online RL.} 
 Beyond the scope of online RL, diffusion policies have also demonstrated remarkable efficacy in offline and offline-to-online RL contexts. We summarize these approaches here to provide a broader perspective on the field. In offline RL, early studies apply diffusion policies to approximate complex behavior policies, which may exhibit high multi-modality and skewness \cite{pearceimitating,chi2025diffusion}. Subsequent works attempt to improve the performance of diffusion policies beyond the behavior policy. SfBC and IDQL \cite{chenoffline,hansen2023idql} generate multiple candidate actions and select actions by sampling from a softmax distribution over their Q-values. EDP and QIPO \cite{kang2023efficient,zhangenergy} reweight the diffusion loss using transformations of the Q-function to emphasize actions with higher Q values. DQL, CPQL, CAC and FQL \cite{wangdiffusion,chen2024boosting,dingconsistency,park2025flow} introduce an additional Q-loss evaluated on actions generated by the diffusion policy and jointly optimize it with the diffusion loss. CEP and DAC \cite{lu2023contrastive,fangdiffusion} incorporate Q-guidance terms into the score function of the diffusion policy, steering the sampling process toward action regions with higher Q-values. Following the offline-to-online RL paradigm, DPPO and ReFlow \cite{ren2025diffusion,zhang2025reinflow} further include online interactions to fine-tune diffusion policies pretrained on offline datasets. They formulate the reverse diffusion process as a Markov Decision Process (MDP) with a simple Gaussian policy and apply the PPO \cite{schulman2017proximal} algorithm to solve it.

\subsection{An Analysis for the Optimization Objective of DIME When Using Few Sampling Steps}
The DIME algorithm~\cite{celikdime} derives a variational lower bound on the entropy of diffusion
policies.
Then the policy entropy in the MaxEnt objective is replaced with this bound to obtain a lower bound on the original optimization objective. Specifically, the entropy bound is given by
\begin{equation}
    \mathcal{H}(\pi(\boldsymbol{a}_0 | \boldsymbol{s}))
    \ge
    \mathbb{E}_{\overleftarrow{\pi}}
    \left[
        \log
        \frac{
            \overrightarrow{\pi}(\boldsymbol{a}_{1:T} | \boldsymbol{s}, \boldsymbol{a}_0)
        }{
            \overleftarrow{\pi}(\boldsymbol{a}_{0:T} | \boldsymbol{s})
        }
    \right],
\end{equation}
where $\overrightarrow{\pi}$ and $\overleftarrow{\pi}$ denote the forward diffusion and reverse
generation processes, respectively, and $T$ is the number of sampling steps. Consider the gap between the true entropy and the variational bound:
\begin{align}
    \Delta
    &=\mathcal{H}(\pi(\boldsymbol{a}_0 | \boldsymbol{s}))
    - \mathbb{E}_{\overleftarrow{\pi}}
    \left[
        \log
        \frac{
            \overrightarrow{\pi}(\boldsymbol{a}_{1:T} | \boldsymbol{s}, \boldsymbol{a}_0)
        }{
            \overleftarrow{\pi}(\boldsymbol{a}_{0:T} | \boldsymbol{s})
        }
    \right]\\
    &=
    \mathbb{E}_{\overleftarrow{\pi}}
    \left[
        D_{\mathrm{KL}}
        \big(
            \overleftarrow{\pi}(\boldsymbol{a}_{1:T} | \boldsymbol{s}, \boldsymbol{a}_0)
            \,\|\,
            \overrightarrow{\pi}(\boldsymbol{a}_{1:T} | \boldsymbol{s}, \boldsymbol{a}_0)
        \big)
    \right].
\end{align}
Under the Markovian assumption of both processes, the joint KL divergence decomposes as
\begin{equation}
    \Delta
    =
    \sum_{t=1}^{T}
    \mathbb{E}_{\overleftarrow{\pi}}
    \left[
        D_{\mathrm{KL}}
        \big(
            \overleftarrow{\pi}(\boldsymbol{a}_{t-1} | \boldsymbol{a}_t, \boldsymbol{s})
            \,\|\,
            \overrightarrow{\pi}(\boldsymbol{a}_{t-1} | \boldsymbol{a}_t, \boldsymbol{s})
        \big)
    \right],
\end{equation}
which is a sum of step-wise KL divergence between the parametric reverse transitions and their ground-truth counterparts.
From diffusion theory, the true reverse transition converges to a Gaussian distribution only in
the infinitesimal limit $T \to \infty$~\cite{sohl2015deep,ho2020denoising}. For finite $T$, the true reverse transition generally deviates from a Gaussian distribution and cannot be exactly captured by a Gaussian approximation. Since the reverse transition
is typically parameterized as a Gaussian, each KL term remains strictly positive, resulting in a
non-negligible gap and a loose bound in the few-step regime.

We empirically evaluate DIME under different sampling steps (Fig.~\ref{pic::DIME}). Reducing
the number of steps from $16$ to $2$ leads to a substantial performance degradation. In contrast,
our MFPO method achieves high performance with $T=2$, suggesting that the degradation of DIME
is not caused by the limited expressivity of few-step diffusion policies, but by the looseness of
the variational objective. This motivates the development of alternative algorithms
that are better suited for few-step diffusion policy learning.

\begin{figure*}[ht]
  \centering
  \includegraphics[width=0.6\columnwidth]{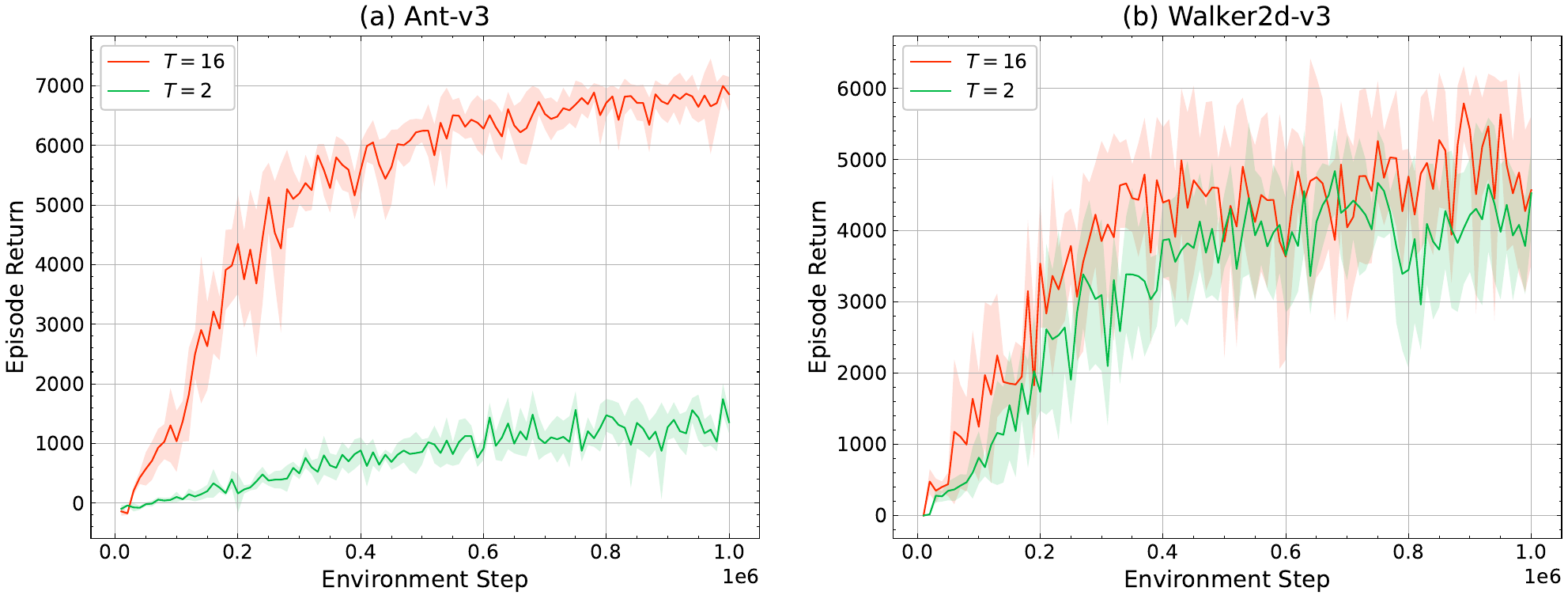}
  \caption{
    Learning curves of DIME algorithm using different sampling steps.
  }
  \label{pic::DIME}
  \vspace{-0.1in}
\end{figure*}

\section{Theoretical Derivations}

\subsection{The Loss Derivation of the Average Divergence Network}
\label{appdx::loss_avg_div}
The ground-truth average divergence is defined as
\begin{equation}
    \delta(\boldsymbol{s}, \boldsymbol{a}_t, r, t)
    =
    \frac{1}{t - r}
    \int_{r}^{t}
    \nabla \cdot \boldsymbol{v}_{\theta}(\boldsymbol{s}, \boldsymbol{a}_{\tau}, \tau)
    \,\mathrm{d}\tau .
\end{equation}
Multiplying both sides by $t-r$ and differentiating with respect to $t$ yield the identity
\begin{equation}
\label{eq::avgdiv_identity}
    \delta(\boldsymbol{s}, \boldsymbol{a}_t, r, t)
    =
    \nabla \cdot \boldsymbol{v}_{\theta}(\boldsymbol{s}, \boldsymbol{a}_t, t)
    -
    (t-r)\frac{\mathrm{d}}{\mathrm{d}t}
    \delta(\boldsymbol{s}, \boldsymbol{a}_t, r, t),
\end{equation}
where the total time derivative is given by
\begin{equation}
    \frac{\mathrm{d}}{\mathrm{d}t}\delta(\boldsymbol{s}, \boldsymbol{a}_t, r, t)
    =
    \boldsymbol{v}_{\theta}(\boldsymbol{s}, \boldsymbol{a}_t, t)\,\partial_{\boldsymbol{a}}\delta
    +
    \partial_t \delta .
\end{equation}

We take the right-hand side of Eq.~\eqref{eq::avgdiv_identity} as the training target for the
average divergence network $\delta_{\omega}$, leading to the objective
\begin{equation}
    \mathcal{L}(\omega)
    =
    \mathbb{E}_{\boldsymbol{s}, r, t, \boldsymbol{a}_t}
    \left\|
    \delta_{\omega}(\boldsymbol{s}, \boldsymbol{a}_t, r, t)
    -
    \operatorname{sg}(\delta_{\mathrm{tgt}})
    \right\|_2^2,
\end{equation}
with
\begin{equation}
    \delta_{\mathrm{tgt}}
    =
    \nabla \cdot \boldsymbol{v}_{\theta}(\boldsymbol{s}, \boldsymbol{a}_t, t)
    -
    (t - r)
    \left(
        \boldsymbol{v}_{\theta}(\boldsymbol{s}, \boldsymbol{a}_t, t)\,\partial_{\boldsymbol{a}}\delta
        +
        \partial_t \delta
    \right).
\end{equation}

In practice, the divergence term is replaced by its unbiased estimator, and the instantaneous
velocity is obtained from the average velocity network
$\boldsymbol{u}_{\theta}(\boldsymbol{s}, \boldsymbol{a}_t, r, t)$ by setting $r=t$.
The final training target therefore becomes
\begin{equation}
    \delta_{\mathrm{tgt}}
    =
    \widehat{\mathrm{div}}
    -
    (t - r)
    \left(
        \boldsymbol{u}_{\theta}(\boldsymbol{s}, \boldsymbol{a}_t, t, t)\,\partial_{\boldsymbol{a}}\delta
        +
        \partial_t \delta
    \right),
\end{equation}
where $\widehat{\mathrm{div}}$ denotes the Skilling--Hutchinson trace estimator
introduced in Eq.~\eqref{eq::div_estimation}.

\subsection{Rewriting the Conditional Distribution $\pi(\boldsymbol{a}_0 | \boldsymbol{a}_t, \boldsymbol{s})$}
\label{appdx::condition_distribution}

In this section, we derive an equivalent expression for the conditional distribution
$\pi(\boldsymbol{a}_0 | \boldsymbol{a}_t, \boldsymbol{s})$.
Specifically, we show that it can be written as
\begin{equation}
    \pi(\boldsymbol{a}_0 | \boldsymbol{a}_t, \boldsymbol{s})
    \propto
    \exp\!\left(\frac{1}{\alpha} Q(\boldsymbol{s}, \boldsymbol{a}_0)\right)
    \mathcal{N}\!\left(
        \boldsymbol{a}_0 \,\middle|\,
        \frac{\boldsymbol{a}_t}{1 - t},
        \left( \frac{t}{1 - t} \right)^2 \boldsymbol{I}
    \right).
\end{equation}

\begin{proof}
By Bayes' rule, we have
\begin{align}
    \pi(\boldsymbol{a}_0 | \boldsymbol{a}_t, \boldsymbol{s})
    &= \frac{
        \pi(\boldsymbol{a}_0 | \boldsymbol{s}) \,
        \pi(\boldsymbol{a}_t | \boldsymbol{a}_0)
    }{
        \pi(\boldsymbol{a}_t | \boldsymbol{s})
    } \\
    &\propto
    \pi(\boldsymbol{a}_0 | \boldsymbol{s}) \,
    \pi(\boldsymbol{a}_t | \boldsymbol{a}_0),
    \label{eq::conditional_distribution_bayes}
\end{align}
where $\pi(\boldsymbol{a}_t | \boldsymbol{s})$ is a constant with respect to
$\boldsymbol{a}_0$.

Under the MaxEnt RL formulation, the target policy satisfies
\(
    \pi(\boldsymbol{a}_0 | \boldsymbol{s})
    =
    \frac{1}{Z(\boldsymbol{s})}
    \exp\!\left( \frac{1}{\alpha} Q(\boldsymbol{s}, \boldsymbol{a}_0) \right),
\)
where $Z(\boldsymbol{s})$ denotes the partition function.
Substituting this expression into
Eq.~\eqref{eq::conditional_distribution_bayes} yields
\begin{align}
    \pi(\boldsymbol{a}_0 | \boldsymbol{a}_t, \boldsymbol{s})
    &\propto
    \exp\!\left( \frac{1}{\alpha} Q(\boldsymbol{s}, \boldsymbol{a}_0) \right)
    \pi(\boldsymbol{a}_t | \boldsymbol{a}_0).
\end{align}

Recall that the forward interpolation process is defined as
\(
    \boldsymbol{a}_t = (1 - t)\boldsymbol{a}_0 + t \boldsymbol{a}_1,
\)
\(
    \boldsymbol{a}_1 \sim \mathcal{N}(\boldsymbol{0}, \boldsymbol{I}),
\)
which implies
\(
    \pi(\boldsymbol{a}_t | \boldsymbol{a}_0)
    =
    \mathcal{N}\!\left(
        \boldsymbol{a}_t \,\middle|\,
        (1 - t)\boldsymbol{a}_0,\,
        t^2 \boldsymbol{I}
    \right).
\)
Note that
\(
    \mathcal{N}\!\left(
        \boldsymbol{a}_t \,\middle|\,
        (1 - t)\boldsymbol{a}_0,\,
        t^2 \boldsymbol{I}
    \right)
    \propto
    \exp\!\left(
        -\frac{
            \left\lVert \boldsymbol{a}_t - (1 - t)\boldsymbol{a}_0 \right\rVert^2
        }{
            2t^2
        }
    \right),
\)
and
\(
    \mathcal{N}\!\left(
        \boldsymbol{a}_0 \,\middle|\,
        \frac{\boldsymbol{a}_t}{1 - t},\,
        \left( \frac{t}{1 - t} \right)^2 \boldsymbol{I}
    \right)
    \propto
    \exp\!\left(
        -\frac{
            \left\lVert
                \boldsymbol{a}_0 - \frac{\boldsymbol{a}_t}{1 - t}
            \right\rVert^2
        }{
            2\left( \frac{t}{1 - t} \right)^2
        }
    \right),
\)
we observe
\begin{equation}
    \mathcal{N}\!\left(
        \boldsymbol{a}_t \,\middle|\,
        (1 - t)\boldsymbol{a}_0,\,
        t^2 \boldsymbol{I}
    \right)
    \propto
    \mathcal{N}\!\left(
        \boldsymbol{a}_0 \,\middle|\,
        \frac{\boldsymbol{a}_t}{1 - t},\,
        \left( \frac{t}{1 - t} \right)^2 \boldsymbol{I}
    \right),
\end{equation}
where the proportionality holds up to a constant independent of
$\boldsymbol{a}_0$. Substituting this relation back into the conditional distribution completes the proof:
\begin{equation}
    \pi(\boldsymbol{a}_0 | \boldsymbol{a}_t, \boldsymbol{s})
    \propto
    \exp\!\left( \frac{1}{\alpha} Q(\boldsymbol{s}, \boldsymbol{a}_0) \right)
    \mathcal{N}\!\left(
        \boldsymbol{a}_0 \,\middle|\,
        \frac{\boldsymbol{a}_t}{1 - t},\,
        \left( \frac{t}{1 - t} \right)^2 \boldsymbol{I}
    \right).
\end{equation}
\end{proof}

\subsection{An Introduction to the Effective Sample Size of Self-Normalized Importance Sampling}
\label{appdx::ess}
Given a target distribution $p(\boldsymbol{x})$, a proposal distribution $q(\boldsymbol{x})$, and a target function $f(\boldsymbol{x})$, we aim to estimate the expectation $\mathbb{E}_{p}[f(\boldsymbol{x})]$. To this end, we compare the vanilla Monte Carlo (MC) estimator with the Self-Normalized Importance Sampling (SNIS) estimator:

\begin{equation}
    \hat{\mu}_{\mathrm{MC}} = \frac{1}{K} \sum_{i=1}^K f(\boldsymbol{x}_i), \quad \boldsymbol{x}_i \sim p,
\end{equation}

\begin{equation}
    \hat{\mu}_{\mathrm{SNIS}} = \sum_{i=1}^K \frac{w(\boldsymbol{x}_i)}{\sum_{j=1}^K w(\boldsymbol{x}_j)} f(\boldsymbol{x}_i), \quad \boldsymbol{x}_i \sim q,
\end{equation}

where $w(\boldsymbol{x}) = p(\boldsymbol{x})/q(\boldsymbol{x})$ denotes the importance weight. The efficiency of the SNIS estimator is typically quantified by the \emph{Effective Sample Size} (ESS) \cite{kong1992note,metelli2020importance}, which characterizes the number of i.i.d. samples from $p$ required to achieve the same estimation variance as $K$ samples from $q$. Formally, the ESS is asymptotically defined through the variance ratio:

\begin{equation}
    \mathrm{ESS}(p \parallel q) \approx K \frac{\mathrm{Var}(\hat{\mu}_{\mathrm{MC}})}{\mathrm{Var}(\hat{\mu}_{\mathrm{SNIS}})}.
\end{equation}

In practice, the ESS is commonly approximated using the Kish’s estimator \cite{kong1992note}:

\begin{equation}
    \widehat{\mathrm{ESS}}(p \parallel q) = \frac{\left(\sum_{i=1}^K w(\boldsymbol{x}_i)\right)^2}{\sum_{i=1}^K w^2(\boldsymbol{x}_i)}.
\end{equation}

This metric ranges from $1$ to $K$. It attains its maximum value $K$ when $q = p$, and degrades as the discrepancy between the proposal distribution and the target distribution increases, indicating higher estimation variance and reduced sampling efficiency.

\subsection{The Bias and Variance Analysis of the Adaptive Instantaneous Velocity Estimator}
\label{appdx::bias_var}

The ground-truth instantaneous velocity is given by:
\(
    \boldsymbol{v}_t(\boldsymbol{a}_t | \boldsymbol{s}) = \mathbb{E}_{\pi(\boldsymbol{a}_0 | \boldsymbol{a}_t, \boldsymbol{s})} \left[ \frac{\boldsymbol{a}_t - \boldsymbol{a}_0}{t} \right].
\)
We consider two self-normalized importance sampling (SNIS) estimators, $\hat{\boldsymbol{v}}^1_t$ and $\hat{\boldsymbol{v}}^2_t$, constructed using proposal distributions $q^1(\boldsymbol{a}_0)$ and $q^2(\boldsymbol{a}_0)$ with $K_1$ and $K_2$ samples, respectively. To leverage the complementary strengths of different proposals, we define a combined estimator as a linear combination of two SNIS estimators:
\begin{equation}
\label{eq:linear_comb}
    \hat{\boldsymbol{v}}_t(\alpha) = \alpha \hat{\boldsymbol{v}}^1_t + (1-\alpha) \hat{\boldsymbol{v}}^2_t, \qquad \alpha \in [0,1].
\end{equation}

\textbf{Asymptotic Bias.} 
It is a well-known property that SNIS estimators are biased in finite samples but asymptotically unbiased. Specifically, for each estimator $k \in \{1,2\}$, its expectation satisfies \cite{mcbook}:
\begin{equation}
    \mathbb{E}[\hat{\boldsymbol{v}}^k_t] = \boldsymbol{v}_t + \mathcal{O}(K_k^{-1}).
\end{equation}
Consequently, as a linear combination with weights summing to one, the combined estimator $\hat{\boldsymbol{v}}_t(\alpha)$ remains asymptotically unbiased, with the bias vanishing at a rate of $\mathcal{O}(\min(K_1, K_2)^{-1})$.

\textbf{Variance.} 
Assuming the two SNIS estimators are independent, the variance of the combined estimator is:
\begin{equation}
\label{eq:linear_var}
    \mathrm{Var}[\hat{\boldsymbol{v}}_t(\alpha)] = \alpha^2 \mathrm{Var}[\hat{\boldsymbol{v}}^1_t] + (1-\alpha)^2 \mathrm{Var}[\hat{\boldsymbol{v}}^2_t].
\end{equation}
Minimizing Eq.~\eqref{eq:linear_var} with respect to $\alpha$ yields the optimal weight:
\begin{equation}
\label{eq:optimal_alpha}
    \alpha^{\star} = \frac{\mathrm{Var}[\hat{\boldsymbol{v}}^2_t]}{\mathrm{Var}[\hat{\boldsymbol{v}}^1_t] + \mathrm{Var}[\hat{\boldsymbol{v}}^2_t]}.
\end{equation}
Thus, the minimum-variance estimator should assign larger weights to the component with smaller variance. For SNIS, the variance is approximately inversely proportional to the ESS:
\begin{equation}
    \mathrm{Var}[\hat{\boldsymbol{v}}^k_t] \propto \frac{1}{\mathrm{ESS}_k}.
\end{equation}
Substituting this approximation into Eq.~\eqref{eq:optimal_alpha}, we derive the adaptive ESS-weighted estimator employed in MFPO:
\begin{equation}
\label{eq:ess_weighted}
    \hat{\boldsymbol{v}}_t = \frac{\mathrm{ESS}_1}{\mathrm{ESS}_1 + \mathrm{ESS}_2} \hat{\boldsymbol{v}}^1_t + \frac{\mathrm{ESS}_2}{\mathrm{ESS}_1 + \mathrm{ESS}_2} \hat{\boldsymbol{v}}^2_t.
\end{equation}
This ESS-weighted estimator is asymptotically unbiased and achieves near-optimal variance among linear combinations of the two SNIS estimators.
As a result, the estimator automatically favors the proposal
that better matches the target distribution,
providing a stable and efficient instantaneous velocity estimation for MFPO.

\subsection{Convergence Analysis of MFPO}
\label{appdx::convergence_analysis}
\textbf{Soft Policy Evaluation.}
We first show that given an accurate likelihood estimator, minimizing the soft Bellman error recovers the soft Q-function of the current policy.

\begin{proposition}[Soft Policy Evaluation]
Assume that the Average Divergence Network (ADN) loss in Eq.~\eqref{eq::loss_avgdiv} is minimized such that the resulting ADN recovers the true action likelihood $\log \pi_{\theta}(\boldsymbol{a}|\boldsymbol{s})$ of the current MeanFlow policy. Further assume that the soft Bellman error in Eq.~\eqref{eq::loss_softQ} is minimized over a sufficiently expressive function class. Then the learned Q-function $Q_{\phi}$ converges to the true soft Q-function $Q^{\pi_{\theta}}$ of the
current policy $\pi_{\theta}$.
\end{proposition}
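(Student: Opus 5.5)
The plan is to reduce the statement to the standard soft policy evaluation lemma of SAC \cite{haarnoja2018soft}: the soft Bellman operator $\mathcal{T}^{\pi}$ is a $\gamma$-contraction in the sup-norm, and its unique fixed point is $Q^{\pi}$. The first step is to use the ADN assumption. If the ADN loss in Eq.~\eqref{eq::loss_avgdiv} is minimized exactly, then $\delta_\omega$ equals the true average divergence $\delta$. This holds because the MeanFlow identity derived in App.~\ref{appdx::loss_avg_div} characterizes $\delta$ as its solution, and the Hutchinson estimator is unbiased, so the regression target has the correct conditional mean. Since $\delta$ is an exact time average of the divergence, the telescoping sum $\frac{1}{T}\sum_i \delta_\omega(\boldsymbol{s},\boldsymbol{a}_{t_i},t_{i-1},t_i)$ equals $\int_0^1 \nabla\cdot\boldsymbol{v}_\theta\,\mathrm{d}t$ along the trajectory. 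The instantaneous change-of-variable formula then shows that the likelihood returned by Alg.~\ref{alg::sample} equals $\log\pi_\theta(\boldsymbol{a}|\boldsymbol{s})$; we take this as given by hypothesis. Consequently, the target $Q_{\mathrm{target}}$ in Eq.~\eqref{eq::loss_softQ} is an unbiased sample of $\mathcal{T}^{\pi_\theta}Q_\phi(\boldsymbol{s},\boldsymbol{a})$, with the expectation taken over $\boldsymbol{s}'\sim p$ and $\boldsymbol{a}'\sim\pi_\theta$.

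Second, we interpret minimizing the squared soft Bellman error over a sufficiently expressive class as follows. With a fixed (target) $Q$ on the right-hand side, the regression minimizer is the conditional expectation, so each minimization step implements $Q_{k+1}=\mathcal{T}^{\pi_\theta}Q_k$ on the support of the buffer distribution. We assume this support covers $\mathcal{S}\times\mathcal{A}$, or at least the relevant region.

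Third, we show that $\mathcal{T}^{\pi}$ is a contraction. Define the entropy-augmented reward $r^{\pi}(\boldsymbol{s},\boldsymbol{a})=r+\gamma\mathbb{E}_{\boldsymbol{s}'}[\alpha\mathcal{H}(\pi(\cdot|\boldsymbol{s}'))]$. Then $\mathcal{T}^{\pi}Q=r^{\pi}+\gamma P^{\pi}Q$, and hence $\|\mathcal{T}^{\pi}Q-\mathcal{T}^{\pi}Q'\|_\infty\le\gamma\|Q-Q'\|_\infty$. Banach's fixed-point theorem then gives $Q_k\to Q^{\pi_\theta}$, and unrolling the fixed-point equation reproduces the series definition of the soft Q-function.

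The main obstacle is boundedness of the entropy term, which is needed for $r^{\pi}$ to be bounded so that the contraction argument applies (it also requires $\gamma<1$). For a MeanFlow policy the entropy need not be bounded a priori. I would impose the same assumption as SAC, namely a bounded action space or bounded $\log\pi_\theta$. This is plausible here, since the prior is Gaussian and the divergence of a Lipschitz network is bounded, which yields a uniform bound on the entropy. A second subtlety is that the result is an identification of the fixed point on the data support, not a finite-sample convergence rate; I would state it in that form.
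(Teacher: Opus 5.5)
Your proposal is correct and follows the same route as the paper's proof. The ADN hypothesis makes the likelihood term in $Q_{\mathrm{target}}$ exact, so the regression target is the soft Bellman backup for $\pi_\theta$, and the $\gamma$-contraction of $\mathcal{T}^{\pi_\theta}$ then identifies $Q^{\pi_\theta}$ as the unique fixed point. Your version differs only in care: you use the convergence-of-iterates half of Banach's theorem (each frozen-target regression step applies $\mathcal{T}^{\pi_\theta}$, which matches the target-network implementation and the word ``converges'') where the paper uses the uniqueness half (a global minimizer is a fixed point), and you state explicitly the conditions $\gamma<1$, bounded entropy, and coverage of the buffer support, which the paper's argument leaves implicit.
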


\begin{proof}
For a MeanFlow policy, the log-likelihood of an action can be computed through the instantaneous change-of-variables formula, which depends on the divergence of the instantaneous velocity field along the transport trajectory. The Skilling--Hutchinson trace estimator provides an unbiased estimator of this divergence. Therefore, when the ADN loss
\[
    \mathcal{L}(\omega)
    =
    \mathbb{E}_{\boldsymbol{s}, r, t, \boldsymbol{a}_t, \boldsymbol{\epsilon}_{1:N}}
    \left\|
    \delta_{\omega}(\boldsymbol{s}, \boldsymbol{a}_t, r, t)
    -
    \operatorname{sg}(\delta_{\mathrm{tgt}})
    \right\|_2^2
\]
is minimized, the learned ADN $\delta_{\omega}$ recovers the corresponding average divergence. Consequently, the resulting likelihood estimate
$\log \pi_{\theta}(\boldsymbol{a}\mid \boldsymbol{s})$ is accurate.

Given the accurate likelihood of the current policy, the soft Bellman target is
\[
    Q_{\mathrm{target}}(\boldsymbol{s},\boldsymbol{a})
    =
    r(\boldsymbol{s},\boldsymbol{a})
    +
    \gamma
    \mathbb{E}_{\boldsymbol{s}'\sim p(\cdot\mid \boldsymbol{s},\boldsymbol{a}),
    \boldsymbol{a}'\sim \pi_{\theta}(\cdot\mid \boldsymbol{s}')}
    \left[
        Q_{\phi}(\boldsymbol{s}',\boldsymbol{a}')
        -
        \alpha \log \pi_{\theta}(\boldsymbol{a}'\mid \boldsymbol{s}')
    \right].
\]
Thus, minimizing the soft Bellman error
\[
    \mathcal{L}(\phi)
    =
    \mathbb{E}_{(\boldsymbol{s},\boldsymbol{a},\boldsymbol{s}')\sim \mathcal{B}}
    \left[
    \frac{1}{2}
    \left(
        Q_{\phi}(\boldsymbol{s},\boldsymbol{a})
        -
        Q_{\mathrm{target}}(\boldsymbol{s},\boldsymbol{a})
    \right)^2
    \right]
\]
enforces $Q_{\phi}$ to be a fixed point of the soft Bellman operator associated
with $\pi_{\theta}$. Since this operator is a contraction \cite{haarnoja2017reinforcement}, it admits a unique fixed point, which is precisely the soft Q-function $Q^{\pi_{\theta}}$. Therefore, at the global minimum, $Q_{\phi}=Q^{\pi_{\theta}}$.
\end{proof}

\textbf{Soft Policy Improvement.}
We next show that the policy improvement step recovers the Boltzmann policy induced by the current soft Q-function when the Adaptive Instantaneous Velocity Estimation (AIVE) is unbiased.

\begin{proposition}[Soft Policy Improvement]
Let $K_1$ and $K_2$ denote the sample sizes of the two SNIS estimators used in AIVE. Suppose that $K_1,K_2\to\infty$, such that AIVE provides an unbiased estimator of the instantaneous velocity field associated with the target distribution
\[
    \pi^*(\boldsymbol{a}\mid \boldsymbol{s})
    =
    \frac{
    \exp\left(\frac{1}{\alpha}Q^{\pi_{\mathrm{old}}}(\boldsymbol{s},\boldsymbol{a})\right)}
    {Z(\boldsymbol{s})}.
\]
Assume further that the MeanFlow policy class $\Pi$ is sufficiently expressive such that $\pi^*(\cdot\mid \boldsymbol{s})\in \Pi$. Then minimizing the MeanFlow policy loss in Eq.~\eqref{eq::loss_meanflowpolicy} yields $\pi_{\mathrm{new}}=\pi^*$, which improves upon $\pi_{\mathrm{old}}$ in the MaxEnt objective.
\end{proposition}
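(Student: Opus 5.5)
The proof has three steps: the policy loss recovers the exact velocity field of $\pi^*$, the learned MeanFlow model therefore samples exactly from $\pi^*$, and the soft policy improvement lemma of soft policy iteration then gives improvement.

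\textbf{Step 1: the estimator.} By App.~\ref{appdx::bias_var}, each SNIS component satisfies $\mathbb{E}[\hat{\boldsymbol{v}}^k_t]=\boldsymbol{v}_t+\mathcal{O}(K_k^{-1})$. The ESS weights are nonnegative and sum to one, so letting $K_1,K_2\to\infty$ makes $\hat{\boldsymbol{v}}_t$ an unbiased (indeed consistent) estimator of
\[
\boldsymbol{v}^*_t(\boldsymbol{a}_t\mid\boldsymbol{s})=\mathbb{E}_{\pi^*(\boldsymbol{a}_0\mid\boldsymbol{a}_t,\boldsymbol{s})}\!\left[\frac{\boldsymbol{a}_t-\boldsymbol{a}_0}{t}\right].
\]
Using the conditional-distribution identity of App.~\ref{appdx::condition_distribution}, this is exactly the marginal velocity of the linear interpolation path from $\pi^*$ at $t=0$ to $\mathcal{N}(\boldsymbol{0},I)$ at $t=1$.

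\textbf{Step 2: the minimizer of the policy loss.} I would treat the regression in Eq.~\eqref{eq::loss_meanflowpolicy} in the usual MeanFlow way, via its fixed point. The squared-loss minimizer regresses onto the conditional expectation of the target, and the target is linear in $\hat{\boldsymbol{v}}_t$. So, replacing $\hat{\boldsymbol{v}}_t$ by $\boldsymbol{v}^*_t$ in expectation, a stationary point with zero loss satisfies the MeanFlow identity
\[
\boldsymbol{u}_\theta=\boldsymbol{v}^*_t-(t-r)\tfrac{\mathrm{d}}{\mathrm{d}t}\boldsymbol{u}_\theta
\]
along trajectories of $\boldsymbol{v}^*$. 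Multiplying by $(t-r)$ gives $\tfrac{\mathrm{d}}{\mathrm{d}t}\big[(t-r)\boldsymbol{u}_\theta\big]=\boldsymbol{v}^*_t$, which vanishes at $t=r$. Integrating in $t$ then shows $\boldsymbol{u}_\theta$ equals the true average velocity of Eq.~\eqref{eq::avg_veocity} for $\boldsymbol{v}^*$. The assumption $\pi^*\in\Pi$, read as realizability of this average velocity, guarantees that the minimum value zero is attained.

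Setting $r=t$ gives $\boldsymbol{v}_\theta=\boldsymbol{v}^*$. The sampling rule Eq.~\eqref{eq::meanflow_sample} then integrates the ODE exactly for any $T$, so by the continuity equation the pushforward of $p_1$ is $\pi^*(\cdot\mid\boldsymbol{s})$. Hence $\pi_{\mathrm{new}}=\pi^*$.

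\textbf{Step 3: improvement.} Since $\pi^*$ is the Boltzmann distribution of $Q^{\pi_{\mathrm{old}}}$, it attains zero KL in Eq.~\eqref{eq::policy_KL} and is therefore that projection's minimizer. The soft policy improvement lemma of \cite{haarnoja2018soft} then applies directly, giving $Q^{\pi_{\mathrm{new}}}\ge Q^{\pi_{\mathrm{old}}}$ and improvement in the MaxEnt objective. For completeness I would sketch the lemma's proof: $\mathbb{E}_{\pi_{\mathrm{new}}}[Q^{\pi_{\mathrm{old}}}-\alpha\log\pi_{\mathrm{new}}]\ge V^{\pi_{\mathrm{old}}}$, followed by repeated expansion of the soft Bellman equation.

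\textbf{Main obstacle.} The hardest part is Step 2, because the stop-gradient makes the loss a fixed-point iteration rather than a genuine minimization. I would avoid analysing optimization dynamics and argue only that any zero-loss solution satisfies the identity, then invoke uniqueness of solutions to the linear ODE in $t$ above.

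A secondary subtlety is that unbiasedness only holds in the limit, so I would state the result for the limiting target $\boldsymbol{v}^*_t$. With squared loss, zero-mean noise in the target does not change the minimizer.
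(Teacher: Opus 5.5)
Your proposal is correct and follows the same route as the paper's proof. The paper argues in three steps: AIVE becomes unbiased as $K_1,K_2\to\infty$ (App.~\ref{appdx::bias_var}); minimizing the MeanFlow loss then recovers the average velocity field transporting $p_1$ to $\pi^*$, so $\pi_{\mathrm{new}}=\pi^*$; and the standard soft policy improvement theorem gives improvement. Your Step 2 justifies what the paper only asserts: you treat the stop-gradient loss as a fixed-point condition, integrate $\frac{\mathrm{d}}{\mathrm{d}t}[(t-r)\boldsymbol{u}_\theta]=\boldsymbol{v}^*_t$, and state explicitly that $\pi^*\in\Pi$ must be read as realizability of that particular average velocity field, which the paper leaves implicit.
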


\begin{proof}
From the bias analysis of AIVE in App. \ref{appdx::bias_var}, as $K_1,K_2\to\infty$, AIVE provides an unbiased estimator of the instantaneous velocity field corresponding to the transport toward the target policy $\pi^*(\cdot\mid \boldsymbol{s})$. Minimizing the MeanFlow policy loss
\[
    \mathcal{L}(\theta)
    =
    \mathbb{E}_{\boldsymbol{s}, r, t, \boldsymbol{a}_t}
    \left\|
        \boldsymbol{u}_{\theta}(\boldsymbol{s}, \boldsymbol{a}_t, r, t)
        -
        \operatorname{sg}(\boldsymbol{u}_{\mathrm{tgt}})
    \right\|_2^2
\]
therefore recovers the average velocity field whose induced terminal distribution is $\pi^*(\cdot|\boldsymbol{s})$. Consequently, the updated MeanFlow policy satisfies
\[
    \pi_{\mathrm{new}}(\cdot|\boldsymbol{s})
    =
    \pi^*(\cdot| \boldsymbol{s}).
\]

By the standard soft policy improvement theorem, the Boltzmann policy $\pi^*$ induced by $Q^{\pi_{\mathrm{old}}}$ minimizes the KL objective in Eq.~\eqref{eq::policy_KL} and therefore satisfies
\[
    Q^{\pi_{\mathrm{new}}}(\boldsymbol{s},\boldsymbol{a})
    \ge
    Q^{\pi_{\mathrm{old}}}(\boldsymbol{s},\boldsymbol{a}),
    \quad \forall\,(\boldsymbol{s},\boldsymbol{a}),
\]
under the MaxEnt objective. Hence the policy improvement step improves upon $\pi_{\mathrm{old}}$.
\end{proof}

\paragraph{Soft Policy Iteration.}
According to the standard soft policy iteration theorem in Sec. 3.2, by repeatedly alternating between soft policy evaluation and soft policy improvement, MFPO will converge to the optimal MaxEnt policy within $\Pi$.

\section{Algorithm Pseudo Code}
We present the training procedure for the average divergence network, the likelihood-aware sampling procedure, and the complete MFPO training algorithm in Alg. \ref{alg::avgdiv_train}, \ref{alg::sample}, and \ref{alg::mfpo}, respectively.

\begin{algorithm}[h]
\caption{Training of the Average Divergence Network}
\label{alg::avgdiv_train}
\begin{algorithmic}[1]
\STATE \textbf{Input:} policy network $\boldsymbol{u}_{\theta}$; average divergence network $\delta_{\omega}$; a batch of $B$ state-action pairs $\{(\boldsymbol{s}, \boldsymbol{a}_0)\}$ from the current policy
\STATE Sample $(r,t) \sim p(r,t)$ and $\boldsymbol{a}_1 \sim \mathcal{N}(\boldsymbol{0}, \boldsymbol{I})$
\STATE $\boldsymbol{a}_t \leftarrow (1-t)\boldsymbol{a}_0 + t \boldsymbol{a}_1$
\FOR{each $\boldsymbol{a}_t$ in the batch}
    \STATE Sample $\{\boldsymbol{\epsilon}_i\}_{i=1}^{N} \sim \mathcal{N}(\boldsymbol{0}, \boldsymbol{I})$
    \STATE Estimate divergence:
    \(
    \widehat{\mathrm{div}}(\boldsymbol{s}, \boldsymbol{a}_t, t)
    =
    \frac{1}{N}
    \sum_{i=1}^{N}
    \boldsymbol{\epsilon}_i^{\top}
    \frac{\partial \boldsymbol{u}_{\theta}(\boldsymbol{s}, \boldsymbol{a}_t, t, t)}
    {\partial \boldsymbol{a}_t}
    \boldsymbol{\epsilon}_i
    \)
    \STATE Construct target:
    \(
    \delta_{\mathrm{tgt}}(\boldsymbol{s}, \boldsymbol{a}_t, r, t)
    =
    \widehat{\mathrm{div}}
    -
    (t-r)
    \left(
        \boldsymbol{u}_{\theta}(\boldsymbol{s}, \boldsymbol{a}_t, t, t)\,\partial_{\boldsymbol{a}}\delta
        +
        \partial_t \delta
    \right)
    \)
\ENDFOR
\STATE Update parameters:
\(
\omega \leftarrow \arg\min_{\omega}
\frac{1}{B}
\sum
\left\|
\delta_{\omega}(\boldsymbol{s}, \boldsymbol{a}_t, r, t)
-
\operatorname{sg}\!\left(\delta_{\mathrm{tgt}}(\boldsymbol{s}, \boldsymbol{a}_t, r, t)\right)
\right\|_2^2
\)
\end{algorithmic}
\end{algorithm}

\begin{algorithm}[th]
\caption{MeanFlow Action Sampling and Likelihood Estimation}
\label{alg::sample}
\begin{algorithmic}[1]
\STATE \textbf{Input:} state $\boldsymbol{s}$; policy network $\boldsymbol{u}_{\theta}$; average divergence network $\delta_{\omega}$; number of sampling steps $T$; prior distribution $p_1$
\STATE Sample $\boldsymbol{a}_{t_T} \sim p_1(\boldsymbol{a}_1)$
\STATE Initialize $\log p(\boldsymbol{a}_{t_T}) \leftarrow \log p_1(\boldsymbol{a}_1)$
\FOR{$i = T, \ldots, 1$}
    \STATE $\boldsymbol{a}_{t_{i-1}} \leftarrow \boldsymbol{a}_{t_i}
    - \frac{1}{T}\, \boldsymbol{u}_{\theta}(\boldsymbol{s}, \boldsymbol{a}_{t_i}, t_{i-1}, t_i)$
    \STATE $\log p(\boldsymbol{a}_{t_{i-1}}) \leftarrow \log p(\boldsymbol{a}_{t_i})
    + \frac{1}{T}\, \delta_{\omega}(\boldsymbol{s}, \boldsymbol{a}_{t_i}, t_{i-1}, t_i)$
\ENDFOR
\STATE \textbf{Output:} action $\boldsymbol{a}_0$ and log-likelihood $\log p(\boldsymbol{a}_0)$
\end{algorithmic}
\end{algorithm}

\begin{algorithm}[h]
\caption{Mean Flow Policy Optimization (MFPO)}
\label{alg::mfpo}
\begin{algorithmic}[1]
\STATE \textbf{Input:} Q-network $Q_{\phi}$; target Q-network $Q_{\bar{\phi}}$; policy network $\boldsymbol{u}_{\theta}$; average divergence network $\delta_{\omega}$; temperature $\alpha$
\FOR{each iteration}
    \FOR{each interaction step}
        \STATE Sample $\boldsymbol{a} \sim \pi_{\theta}(\cdot | \boldsymbol{s})$ using Algorithm~\ref{alg::sample}
        \STATE Execute action and observe $(\boldsymbol{s}', r) = \mathrm{env}(\boldsymbol{a})$
        \STATE Store transition $(\boldsymbol{s}, \boldsymbol{a}, r, \boldsymbol{s}')$ in replay buffer $\mathcal{B}$
    \ENDFOR
    \FOR{each update step}
        \STATE Sample $B$ transitions $\{(\boldsymbol{s}, \boldsymbol{a}, r, \boldsymbol{s}')\}$ from $\mathcal{B}$
        \STATE \textcolor{gray}{\textit{// Policy evaluation:}}
        \STATE Sample $\boldsymbol{a}' \sim \pi_{\theta}(\cdot | \boldsymbol{s}')$ and compute $\log \pi_{\theta}(\boldsymbol{a}' | \boldsymbol{s}')$ using Algorithm~\ref{alg::sample}
        \STATE Compute target:
        \(
        Q_{\mathrm{target}} \leftarrow r
        + \gamma \left(
            Q_{\bar{\phi}}(\boldsymbol{s}', \boldsymbol{a}')
            - \alpha \log \pi_{\theta}(\boldsymbol{a}' | \boldsymbol{s}')
        \right)
        \)
        \STATE Update Q-network:
        \(
        \phi \leftarrow \arg\min_{\phi}
        \frac{1}{B}
        \sum
        \tfrac{1}{2}
        \big(
        Q_{\phi}(\boldsymbol{s}, \boldsymbol{a})
        -
        Q_{\mathrm{target}}
        \big)^2
        \)
        \STATE \textcolor{gray}{\textit{// Policy improvement:}}
        \STATE Sample $(r,t) \sim p(r,t)$
        \FOR{$k = 1,2$}
            \STATE Sample $\boldsymbol{a}_0 \sim q^{k}$
            \STATE Estimate instantaneous velocity via SNIS:
            \(
            \hat{\boldsymbol{v}}_t^{k}
            =
            \sum
            \frac{w(\boldsymbol{a}_0)}{\sum w(\boldsymbol{a}_0)}
            \cdot
            \frac{\boldsymbol{a}_t - \boldsymbol{a}_0}{t}
            \)
            \STATE Compute effective sample size:
            \(
            \mathrm{ESS}_k
            =
            \frac{\left(\sum w(\boldsymbol{a}_0)\right)^2}
            {\sum w^2(\boldsymbol{a}_0)}
            \)
        \ENDFOR
        \STATE Combine estimates:
        \(
        \hat{\boldsymbol{v}}_t
        =
        \frac{\mathrm{ESS}_1}{\mathrm{ESS}_1 + \mathrm{ESS}_2}\hat{\boldsymbol{v}}_t^{1}
        +
        \frac{\mathrm{ESS}_2}{\mathrm{ESS}_1 + \mathrm{ESS}_2}\hat{\boldsymbol{v}}_t^{2}
        \)
        \STATE Construct target:
        \(
        \boldsymbol{u}_{\mathrm{tgt}}
        =
        \hat{\boldsymbol{v}}_t
        -
        (t-r)
        \left(
        \hat{\boldsymbol{v}}_t\,\partial_{\boldsymbol{a}} \boldsymbol{u}_{\theta}
        +
        \partial_t \boldsymbol{u}_{\theta}
        \right)
        \)
        \STATE Update policy network:
        \(
        \theta \leftarrow \arg\min_{\theta}
        \frac{1}{B}
        \sum
        \left\|
        \boldsymbol{u}_{\theta}(\boldsymbol{s}, \boldsymbol{a}_t, r, t)
        -
        \operatorname{sg}(\boldsymbol{u}_{\mathrm{tgt}})
        \right\|_2^2
        \)
        \STATE Update the average divergence network $\delta_{\omega}$ using Algorithm~\ref{alg::avgdiv_train}
        \STATE Update target network: $\bar{\phi} \leftarrow \tau \phi + (1-\tau)\bar{\phi}$
    \ENDFOR
\ENDFOR
\end{algorithmic}
\end{algorithm}

\section{Experimental Details}

In this paper, all experiments are conducted on a server equipped with an NVIDIA GeForce RTX 3090 GPU and an Intel Xeon Platinum 8280 CPU. The implementations of baseline methods follow their official repositories: DIME\footnote{\url{https://github.com/ALRhub/DIME}}, FlowRL\footnote{\url{https://github.com/bytedance/FlowRL}}, SAC-Flow\footnote{\url{https://github.com/Elessar123/SAC-FLOW}}, MaxEntDP\footnote{\url{https://github.com/diffusionyes/MaxEntDP}}, DACER\footnote{\url{https://github.com/happy-yan/DACER-Diffusion-with-Online-RL}}, QVPO\footnote{\url{https://github.com/wadx2019/qvpo}}, TD3\footnote{\url{https://github.com/sfujim/TD3}}, and SAC\footnote{\url{https://github.com/toshikwa/soft-actor-critic.pytorch}}. These official implementations are developed using different machine learning frameworks. Specifically, DIME, SAC-Flow, MaxEntDP, and DACER are implemented in JAX \cite{frostig2019compiling}, FlowRL is implemented in PyTorch \cite{paszke2019pytorch} and executed in compiled mode with performance comparable to JAX, while QVPO, SAC, and TD3 are implemented in PyTorch. To ensure a fair comparison of training and inference efficiency, the reported training and inference time of QVPO, SAC, and TD3 is measured using our JAX implementations of these methods. The hyperparameters shared across all algorithms are summarized in Tab.~\ref{tab::parameter}. All hyperparameters follow their default settings in the official implementations, except that the batch size, the network size and the update-to-data ratio are unified across all methods for fair comparison.

\begin{table}[ht]
\caption{Shared hyperparameters across all algorithms.}
\label{tab::parameter}
\vskip 0.15in
\begin{center}
\begin{small}
\begin{tabular}{lcccccccccc}
\toprule
Hyperparameter & MFPO & DIME & FlowRL & SAC-Flow & MaxEntDP & DACER & QVPO & TD3 & SAC  \\
\midrule
Batch size & 256 & 256 & 256 & 256 & 256 & 256 & 256 & 256 & 256 \\
Discount factor $\gamma$ & 0.99 & 0.99 & 0.99 & 0.99 & 0.99 & 0.99 & 0.99 & 0.99 & 0.99\\
Target smoothing coefficient $\tau$ & 0.005 & 1 & 0.95 & 1  & 0.005 & 0.005 & 0.005 & 0.005 & 0.005 \\
No. of hidden layers & 3 & 3 & 3 & N/A & 3 & 3 & 3 & 3 & 3 \\
No. of hidden nodes per layer & 256 & 256 & 256 & N/A & 256 & 256 & 256 & 256 & 256\\
Actor learning rate & 3e-4 & 3e-4 & 3e-4 & 3e-4 & 3e-4 & 1e-4 & 3e-4 & 3e-4 & 3e-4 \\
Critic learning rate & 3e-4 & 3e-4 & 3e-4 & 1e-3 & 3e-4 & 1e-4 & 3e-4 & 3e-4 & 3e-4 \\
Activation function & gelu & gelu & elu & gelu & mish & mish & mish & relu & relu \\
Replay buffer size & 1e6 & 1e6 & 1e6 & 1e6 & 1e6 & 1e6 & 1e6 & 1e6 & 1e6 \\
Update-to-data ratio & 1 & 1 & 1 & 1 & 1 & 1 & 1 & 1 & 1 \\
Diffusion steps & 2 & 16 & 1 & 4 & 20 & 20 & 20 & N/A & N/A \\
No. of action candidates & 10 & N/A & N/A & N/A & 10 & N/A & 32 & N/A & N/A \\
\bottomrule
\end{tabular}
\end{small}
\end{center}
\vskip -0.1in
\end{table}

\section{Supplementary Experiments}
\subsection{2-D Toy Example}

We conduct experiments on a 2-D toy example to validate the effectiveness of the proposed MFPO method. Specifically, the policy network and the average divergence network are trained to approximate a Gaussian Mixture Model (GMM) with 6 symmetrically placed components, where the reward function is defined as the log-probability of the target GMM. After training, we visualize the actions generated by the MeanFlow policy together with their log-likelihood estimates obtained from the average divergence network, as shown in Fig.~\ref{pic::2d_toy}(c). For reference, Fig.~\ref{pic::2d_toy}(a) displays actions sampled from the true GMM along with their exact log-likelihoods, while Fig.~\ref{pic::2d_toy}(b) presents the actions and log-likelihood estimates obtained by solving the diffusion ODE using a 1000-step Euler solver. As observed, the action distribution and log-likelihood estimates produced by the 2-step MFPO closely match those of the true GMM and the fine-grained Euler solver. This result demonstrates that MFPO can accurately approximate both the target action distribution and its likelihood using only a few sampling steps, highlighting its effectiveness.
\begin{figure}[h]
  \centering
  \includegraphics[width=0.6\columnwidth]{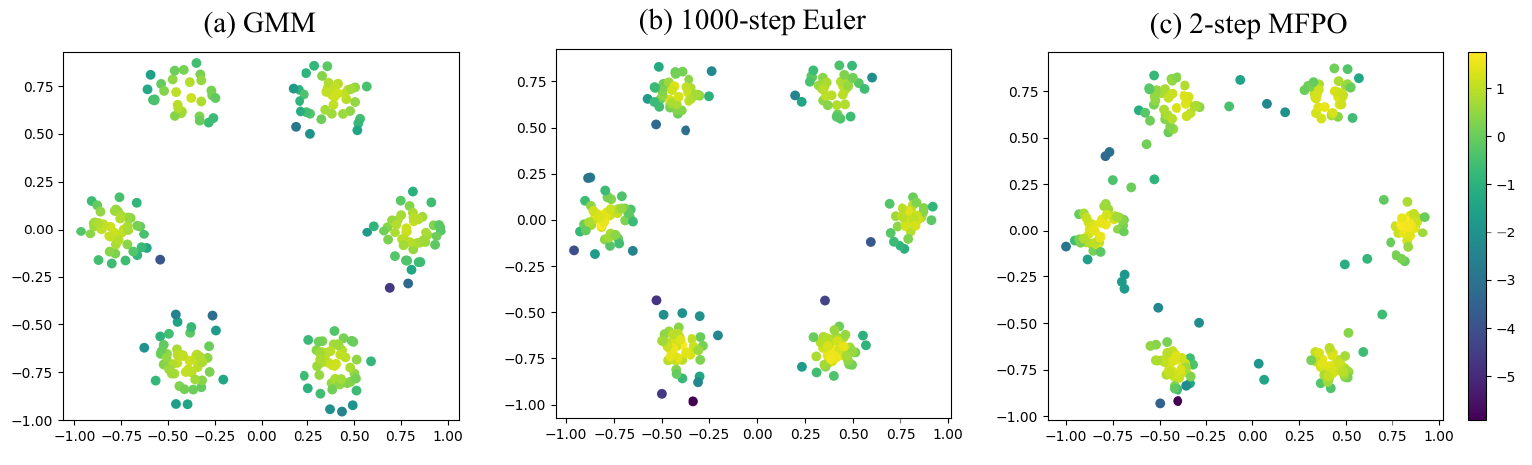}
  \caption{
    Visualization of generated actions and their log-likelihood estimates on a 2-D toy example. Each point denotes an action, and the color indicates the corresponding log-likelihood value. (a) Actions and exact log-likelihoods from the true GMM. (b) Actions and log-likelihood estimates obtained by solving the diffusion ODE with a 1000-step Euler solver. (c) Actions and log-likelihood estimates produced by the 2-step MFPO.
  }
  \label{pic::2d_toy}
\end{figure}

\subsection{Computation Time of Key Components in MFPO}
We evaluate the computational overhead and performance gains of the key components in MFPO, including MeanFlow models, the Average Divergence Network (ADN), and Adaptive Instantaneous Velocity Estimation (AIVE). We begin with a diffusion-policy baseline obtained by removing all three components, where the algorithm learns a standard Q-function and trains an instantaneous velocity network to approximate the SNIS estimator under a single Gaussian proposal. We then progressively incorporate each component to assess its contribution in terms of both computational cost and policy performance. The returns of all variants are normalized by the return of the full MFPO algorithm. The results are reported in Tab. \ref{tab::training_time}. Overall, the proposed components incur only a little computational overhead while providing substantial performance improvements.

\begin{table*}[h]
  \caption{
  Training overhead and performance gain of key components in MFPO, evaluated on Mujoco benchmarks.
  }
  \label{tab::training_time}
  \centering
  \small
  \begin{tabular}{lcc}
    \toprule
    Algorithm & Training Time (h) & Normalized Return \\
    \midrule
    Diffusion Policy & 2.45 & 0.52 \\
    MeanFlow Policy & 2.45 & 0.77 \\
    MeanFlow Policy + ADN & 2.58 & 0.91 \\
    MeanFlow Policy + ADN + AIVE & 2.81 & 1.00 \\
    \bottomrule
  \end{tabular}
\end{table*}

\subsection{The approximation error of the Average Divergence Network}
Effective soft policy evaluation critically relies on accurate action-likelihood estimation by the Average Divergence Network (ADN). In Fig. ~\ref{fig:adn_training}, we report the ADN training loss, the variance of the Skilling--Hutchinson trace estimator, and the relative error of the action-likelihood estimates produced by ADN against those obtained from a 1000-step Euler ODE solver, which serves as a near-ground-truth reference. The results show that both the training loss and estimator variance remain low throughout training, indicating stable optimization. Moreover, ADN closely matches the likelihood estimates of the 1000-step Euler solver while using only two sampling steps, enabling efficient and accurate likelihood estimation for soft policy evaluation.

\begin{figure*}[h]
  \centering
  \includegraphics[width=0.9\columnwidth]{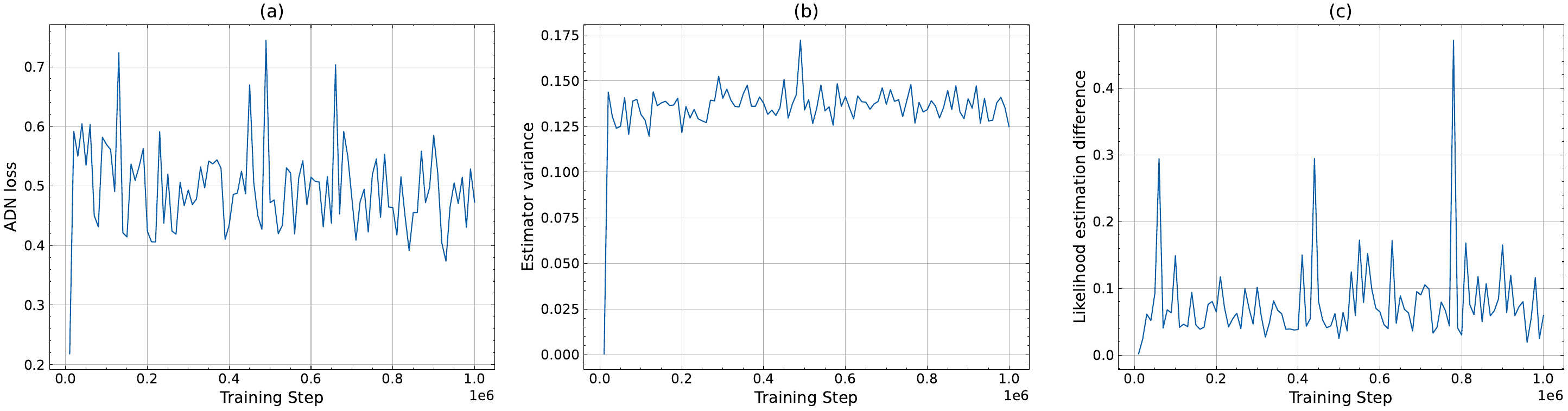}
  \caption{
  Training analysis of the Average Divergence Network (ADN) on the HalfCheetah-v3 benchmark.
  (a) Training loss of ADN.
  (b) Estimation variance of the Skilling–Hutchinson trace estimator.
  (c) Relative error of action likelihood estimation compared to a 1000-step Euler ODE solver, normalized by the latter.
  }
  \label{fig:adn_training}
\end{figure*}

\begin{figure}[h]
  \centering
  \includegraphics[width=0.95\columnwidth]{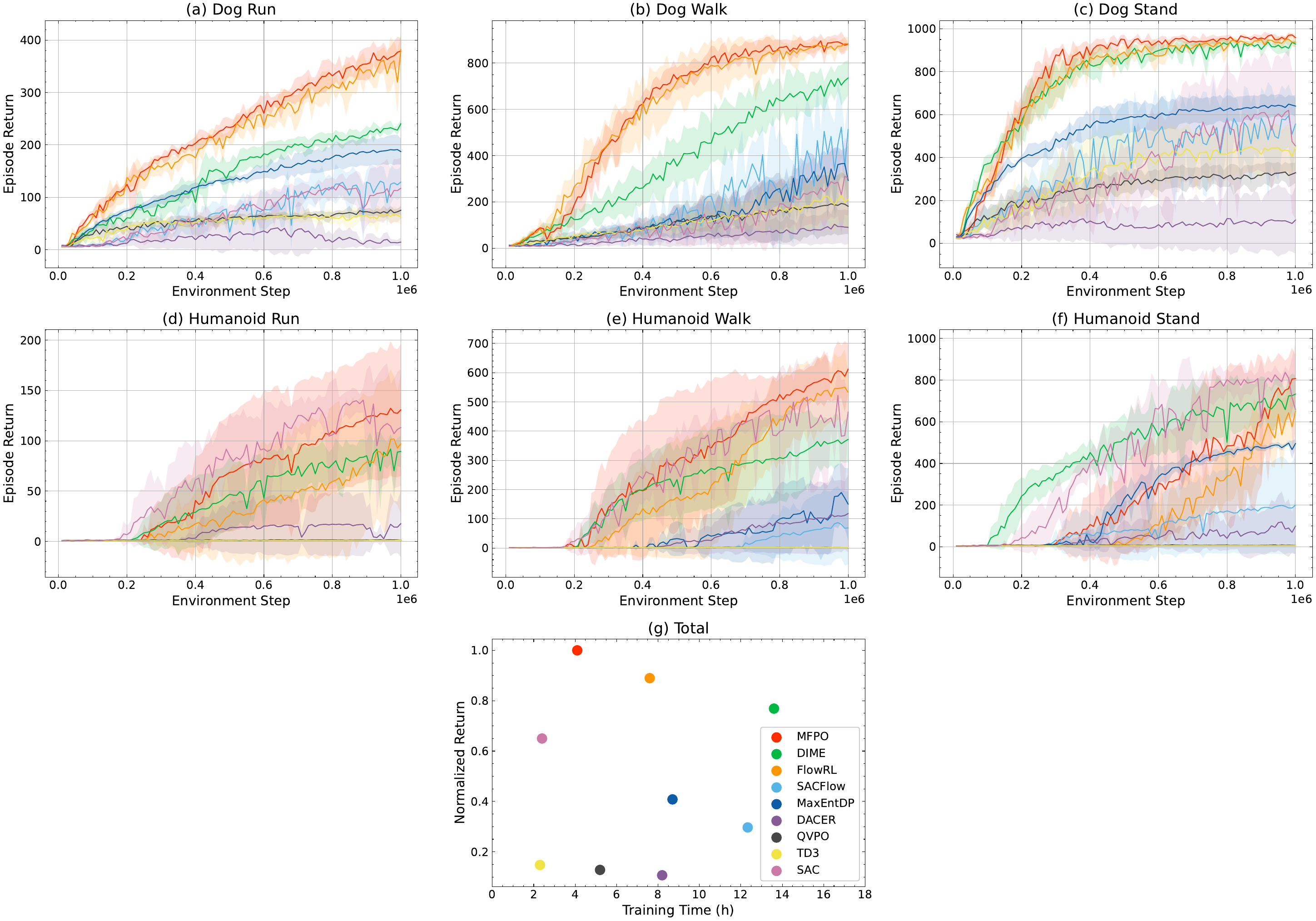}
  \caption{
    Comparison of different algorithms on 6 hard tasks from DeepMind Control Suite.
    (a--f) Learning curves averaged over 5 random seeds; the shaded regions denote the standard deviation.
    (g) Average normalized return versus training time across all tasks.
    The normalized return on each task is computed by dividing the average return over the last 10\% of environment steps by that of MFPO.
    Methods closer to the upper-left region exhibit both higher performance and greater time efficiency.
  }
  \label{pic::cmp_baseline_dmc}
  \vspace{-0.1in}
\end{figure}

\begin{figure}[h]
  \centering
  \includegraphics[width=0.95\columnwidth]{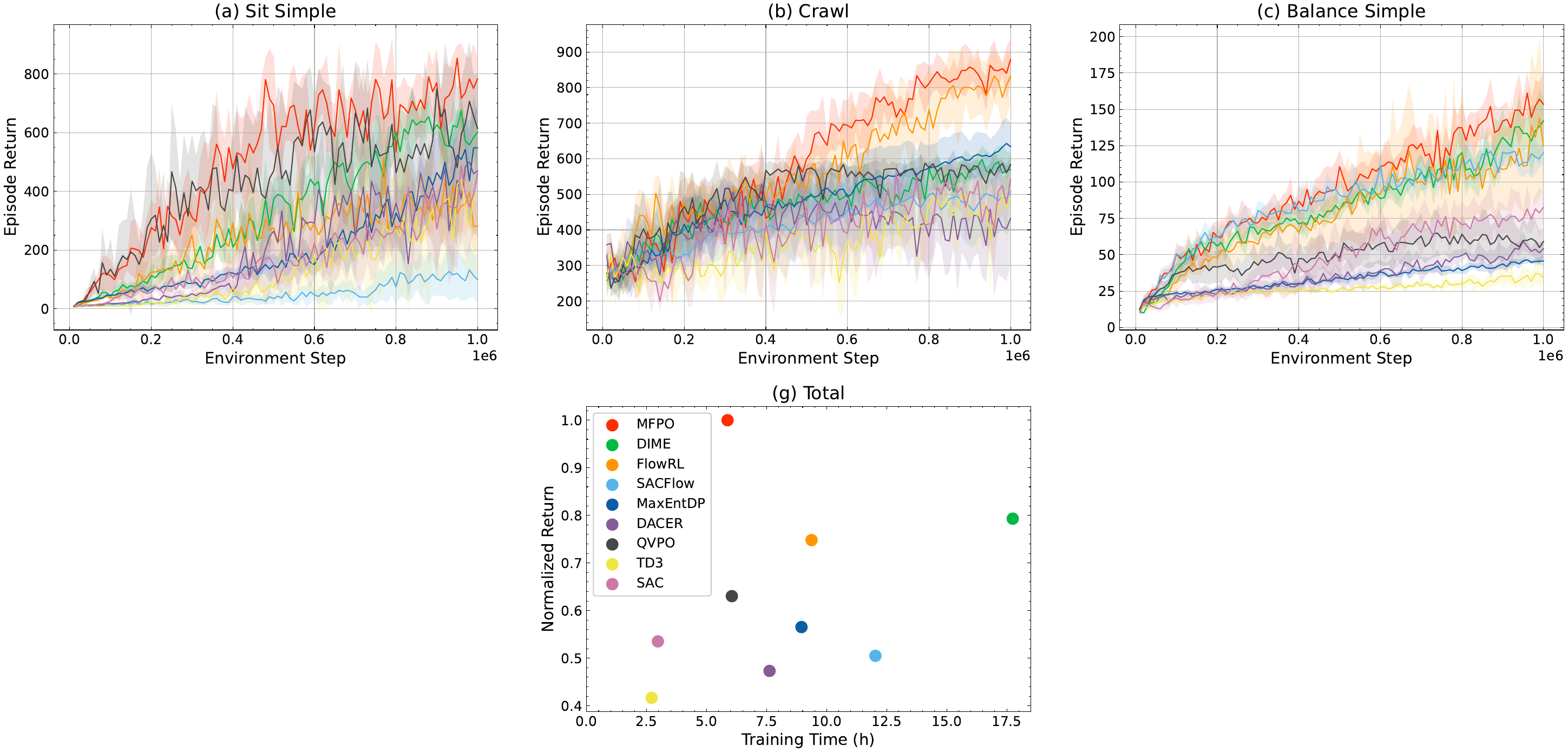}
  \caption{
    Comparison of different algorithms on 3 high-dimensional tasks from HumanoidBench.
    (a--f) Learning curves averaged over 5 random seeds; the shaded regions denote the standard deviation.
    (g) Average normalized return versus training time across all tasks.
    The normalized return on each task is computed by dividing the average return over the last 10\% of environment steps by that of MFPO.
    Methods closer to the upper-left region exhibit both higher performance and greater time efficiency.
  }
  \label{pic::cmp_baseline_hb}
  \vspace{-0.1in}
\end{figure}

\subsection{Comparative Evaluation on DeepMind Control Suite and HumanoidBench}
We compare MFPO with baseline algorithms on 6 hard tasks from DeepMind Control Suite and 3 high-dimensional control tasks from HumanoidBench. As shown in Fig.~\ref{pic::cmp_baseline_dmc} and Fig.~\ref{pic::cmp_baseline_hb}, MFPO consistently achieves comparable or better performance than diffusion-based baselines, while requiring less training time.

\subsection{Comparison with Gaussian Mixture Models and Normalization Flows}
In addition to the MeanFlow models adopted in this paper, other generative models can also represent multi-modal distributions, such as Gaussian mixture models and normalizing flows. We compare MFPO with GMM-based policies and normalizing-flow-based policies to demonstrate the advantages of MeanFlow models as policy representations. Specifically, we implement SAC-GMM, a variant of SAC whose policy is parameterized as a Gaussian mixture, and evaluate it with 4 and 16 mixture components. We also compare against MEow~\cite{chao2024maximum}, a MaxEnt RL algorithm that represents policies using normalizing flows.

As shown in Figure~\ref{pic::cmp_GMM_NF}, MFPO outperforms both SAC-GMM and MEow. For GMM-based policies, the appropriate number of mixture components is task-dependent and difficult to determine in practice. For normalizing flows, the requirement of invertible transformations constrains the class of network parameterization, which may limit their expressivity. These limitations may partly explain the observed performance gap and highlight the effectiveness of MeanFlow models as expressive policy representations for MaxEnt RL.

\begin{figure}[h]
  \centering
  \includegraphics[width=0.95\columnwidth]{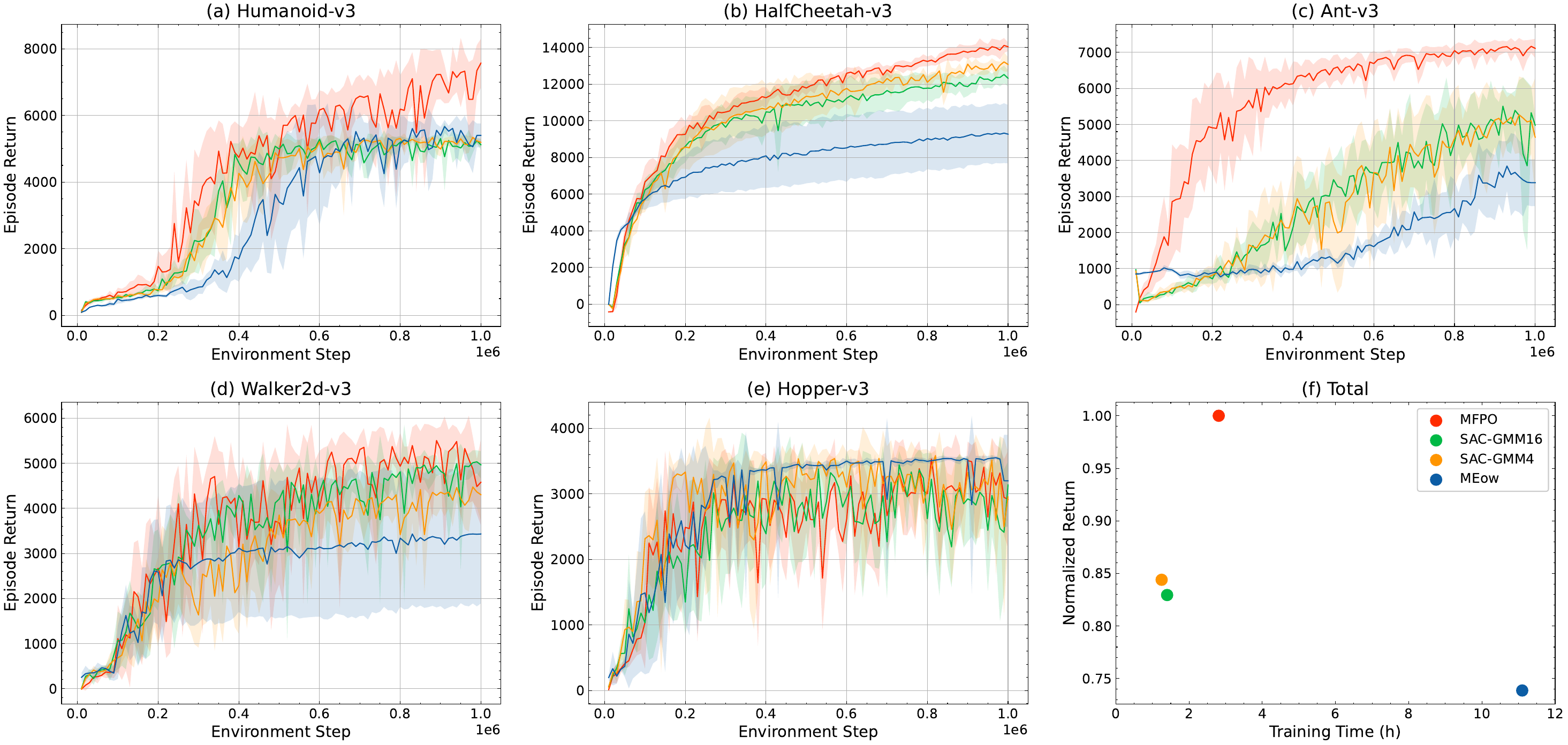}
  \caption{
    Comparison with SAC-GMM and MEow on MuJoCo benchmarks.
    (a--f) Learning curves averaged over 5 random seeds; the shaded regions denote the standard deviation.
    (g) Average normalized return versus training time across all tasks.
    The normalized return on each task is computed by dividing the average return over the last 10\% of environment steps by that of MFPO.
    Methods closer to the upper-left region exhibit both higher performance and greater time efficiency.
  }
  \label{pic::cmp_GMM_NF}
  \vspace{-0.1in}
\end{figure}

\subsection{Multi-modal Policy Learning on AntMaze Benchmarks}

In this subsection, we investigate whether MFPO can learn multi-modal policies on complex multi-goal RL tasks. To highlight the importance of combining the MaxEnt RL objective with an expressive policy representation, we compare MFPO against SAC and BPTT. Here, BPTT is a simple baseline implemented by ourselves, which trains a MeanFlow policy under the standard RL objective by directly backpropagating Q-gradients through the flow chain. We evaluate these algorithms on AntMaze benchmarks, where algorithms require to control a quadruped robot to navigate from the start to any goal.

Figure~\ref{fig:antmaze} visualizes the trajectories generated by different algorithms. MFPO discovers multiple feasible paths, demonstrating its ability to learn multi-modal policies in challenging multi-goal tasks. Although SAC optimizes the MaxEnt RL objective to encourage diverse behaviors, the unimodal nature of Gaussian policies limits it to a single behavior mode. As a result, SAC may converge to only one feasible solution or even fail to discover a valid solution, as observed on AntMaze-v4. These results suggest that expressive policy representations capable of modeling complex multi-modal action distributions are crucial for exploring multiple behavior modes effectively.

However, expressivity alone is not sufficient. Without entropy regularization, BPTT tends to collapse to near-deterministic solutions due to pure Q maximization. Indeed, under the standard RL objective, the optimal policy places all probability mass on actions that maximize \(Q(\boldsymbol{s}, \boldsymbol{a})\), rather than maintaining diverse action modes. Overall, the combination of MeanFlow policies and the MaxEnt RL framework is necessary: MeanFlow model provides the expressive policy class required to represent multi-modal behaviors, while the MaxEnt objective prevents mode collapse and encourages the discovery of diverse feasible solutions.

\begin{figure}[h]
    \centering
    \includegraphics[width=0.9\textwidth]{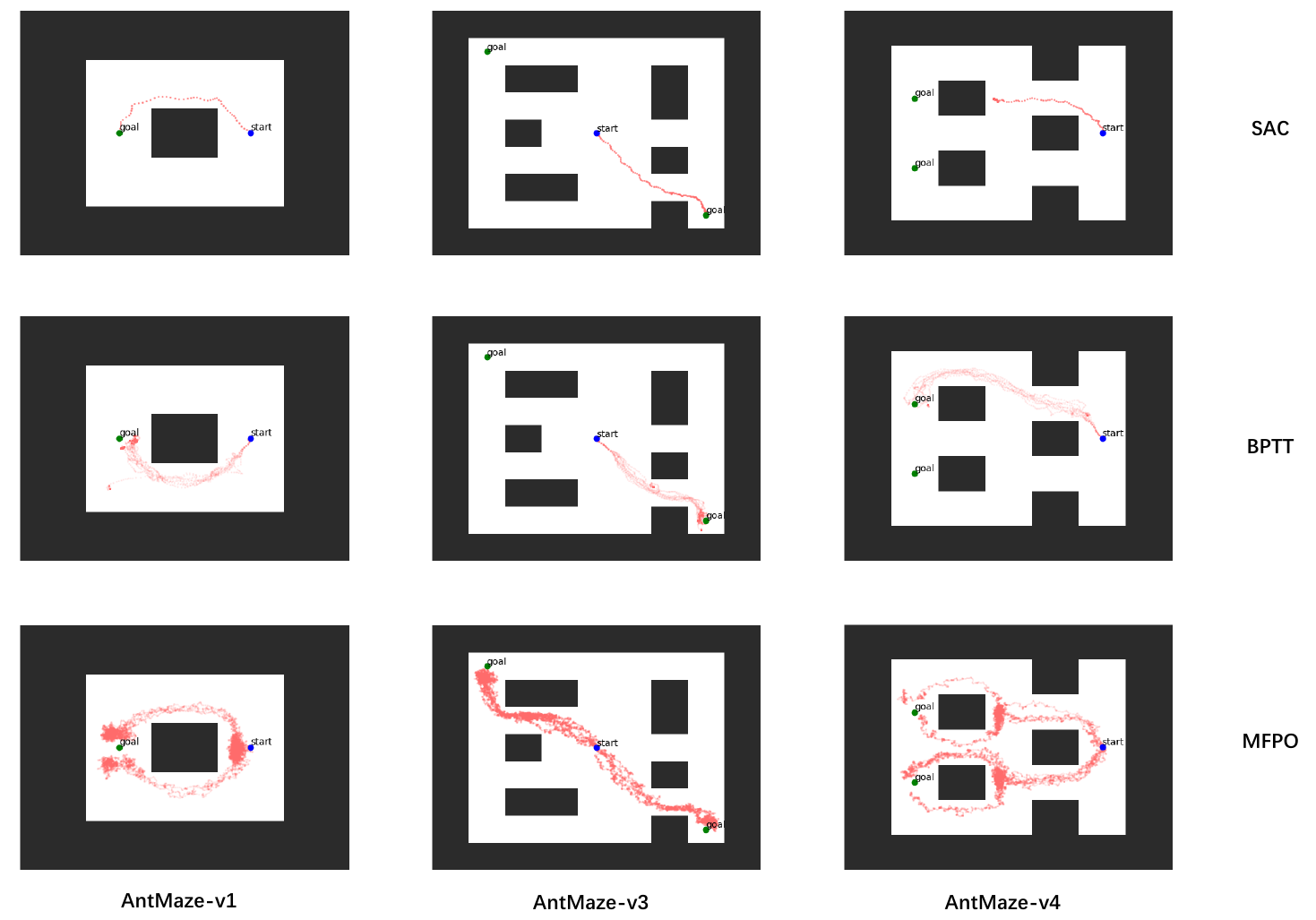}
    \caption{Trajectories generated by SAC, BPTT, and MFPO after 100k environment interactions on AntMaze tasks.}
    \label{fig:antmaze}
\end{figure}

\subsection{Ablation Studies}
\label{apdx::ablation}
\textbf{Distributional Q-learning.}
Since the policy improvement step is guided by the Q function obtained during policy evaluation, accurately estimating the Q function of the current policy is critical for effective policy optimization. As shown in Fig.~\ref{pic::ablation_appdx}(a), disabling distributional Q-learning leads to a performance degradation of MFPO. This result indicates that enhancing the accuracy of policy evaluation via distributional Q-learning plays a significant role in the overall performance of MFPO.

\textbf{Action Selection.}
At inference time, the action selection technique can further improve the optimality of actions generated by MeanFlow policies. We evaluate the effect of different numbers of action candidates $M$ in Fig.~\ref{pic::ablation_appdx}(b). The results show that employing action selection ($M>1$) outperforms the setting without action selection ($M=1$). Therefore, we set $M=10$ in all experiments, as it achieves strong performance while incurring only a modest computational overhead.

\begin{figure}[h]
  \centering
  \includegraphics[width=0.6\columnwidth]{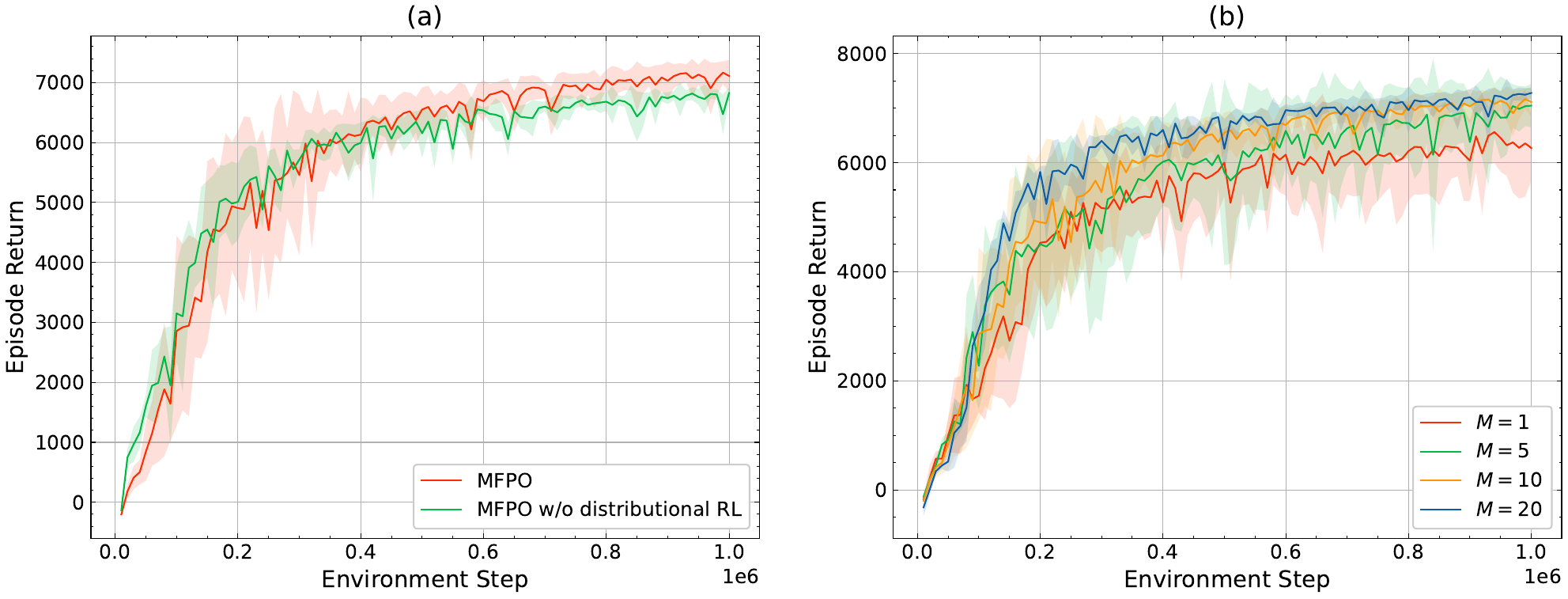}
  \caption{
    Ablation study of distributional Q-learning and action selection on the Ant-v3 benchmark.
    (a) Learning curves of MFPO with and without distributional Q-learning.
    (b) Learning curves of MFPO with different numbers of candidate actions.
  }
  \label{pic::ablation_appdx}
\end{figure}

\subsection{Hyperparameter Analysis}
\label{apdx::hyperparameter}
Here we analyze the effect of key hyperparameters on the performance of MFPO.

\textbf{Sampling steps.}
By modeling the average velocity field, MeanFlow policies retain high expressivity even with a small number of sampling steps. As shown in Fig.~\ref{pic::apdx_hyperparameter}(a), using only two sampling steps achieves performance comparable to that obtained with a much larger number of steps (e.g., $T=16$), demonstrating the advantage of adopting MeanFlow models as policy representations. Throughout this paper, we therefore set $T=2$ as the default configuration.

\textbf{Sample number for instantaneous velocity estimation.}
Increasing the number of samples reduces both the bias and variance of the SNIS estimator, and is thus preferable when computational resources permit. This trend is consistent with the empirical results in Fig.~\ref{pic::apdx_hyperparameter}(b), where larger sample sizes lead to improved performance. Based on this trade-off, we set $K_1=16$ and $K_2=32$, which perform well across most tasks.

\textbf{Sample number for divergence estimation.}
The variance of the divergence estimator decreases as the number of samples $N$ increases. Consequently, using more samples yields more stable divergence estimates, which in turn stabilizes the training of the average divergence network and improves action likelihood estimation. As shown in Fig.~\ref{pic::apdx_hyperparameter}(c), setting $N=2$ already achieves strong performance, while further increasing $N$ results in only marginal improvements.

\begin{figure*}[h]
  \centering
  \includegraphics[width=0.9\columnwidth]{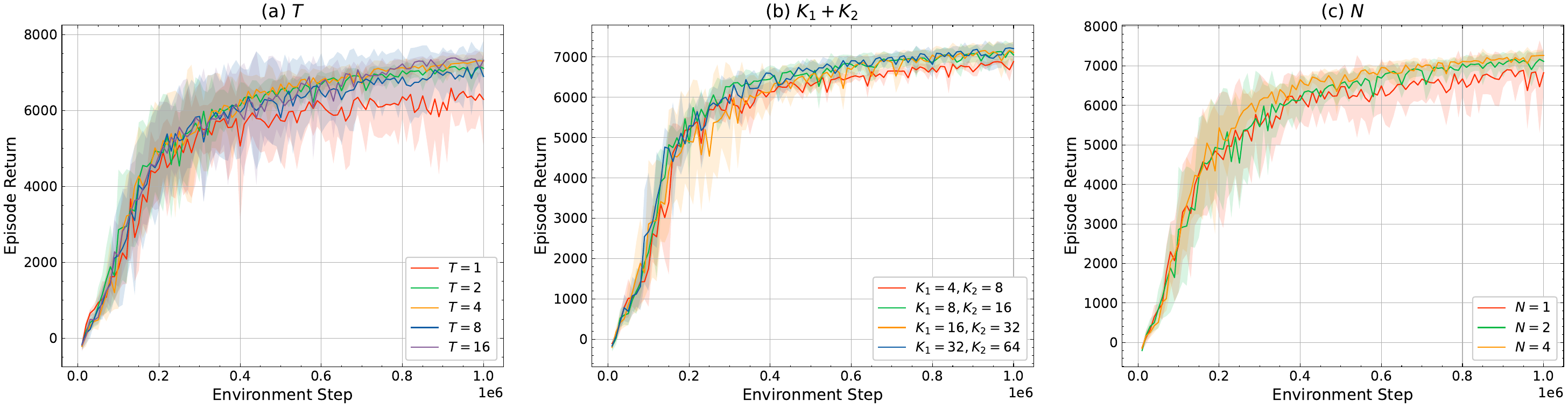}
  \caption{
    Hyperparameter analysis on the Ant-v3 benchmark.
    (a) Learning curves of MFPO with different numbers of sampling steps $T$.
    (b) Learning curves of MFPO with different numbers of samples for instantaneous velocity estimation; the sample ratio between the two proposals is fixed while their total number is varied.
    (c) Learning curves of MFPO with different numbers of samples $N$ for divergence estimation.
  }
  \label{pic::apdx_hyperparameter}
\end{figure*}


\end{document}